
\documentclass[10pt,twocolumn,letterpaper]{article}

\usepackage[pagenumbers]{wacv} 

\usepackage{adjustbox}
\usepackage{multirow}

\usepackage{algorithm}
\usepackage{algorithmic}

\usepackage{amsthm}
\usepackage{amsmath}

\newtheorem{theorem}{Theorem}
\newtheorem{lemma}[theorem]{Lemma}


%
\definecolor{wacvblue}{rgb}{0.21,0.49,0.74}
\usepackage[pagebackref,breaklinks,colorlinks,allcolors=wacvblue]{hyperref}


\title{Data-Driven Lipschitz Continuity: \\ A Cost-Effective Approach to Improve Adversarial Robustness}

\author{Erh-Chung Chen \textsuperscript{1}
\and
Pin-Yu Chen \textsuperscript{2}
\and
I-Hsin Chung \textsuperscript{2}
\and 
Che-Rung Lee \textsuperscript{1} 
\and
National Tsing Hua University \textsuperscript{1} \\
\and
IBM Research \textsuperscript{2} \\
}

\begin{document}
\maketitle
\begin{abstract}
As deep neural networks (DNNs) are increasingly deployed in sensitive applications, ensuring their security and robustness has become critical. A major threat to DNNs arises from adversarial attacks, where small input perturbations can lead to incorrect predictions. Recent advances in adversarial training improve robustness by incorporating additional examples from external datasets or generative models. However, these methods often incur high computational costs, limiting their practicality and hindering real-world deployment.
In this paper, we propose a cost-efficient alternative based on Lipschitz continuity that achieves robustness comparable to models trained with extensive supplementary data. Unlike conventional adversarial training, our method requires only a single pass over the dataset without gradient estimation, making it highly efficient. Furthermore, our method can integrate seamlessly with existing adversarial training frameworks and enhances the robustness of models without requiring extra generative data. Experimental results show that our approach not only reduces computational overhead but also maintains or improves the defensive capabilities of robust neural networks. This work opens a promising direction for developing practical, scalable defenses against adversarial attacks.
\end{abstract}

\section{Introduction}
Deep neural networks (DNNs) have achieved remarkable success across a wide range of tasks \citep{krizhevsky2012imagenet,redmon2016you}, driving their adoption in increasingly sensitive applications. As a result, concerns over the security and robustness of these models have become more urgent, as even a single erroneous prediction can have serious consequences. For example, the Overload attack can significantly inflate the inference time of detecting objects for edge devices \citep{chen2023overload}, and even minor typos in prompts can cause large language models to behave unpredictably \citep{zhu2023promptbench}. Among the most critical threats are adversarial attacks, which add imperceptible perturbations to inputs, leading DNNs to make incorrect predictions. These vulnerabilities have been demonstrated across various domains \citep{szegedy2013intriguing,carlini2018audio,li2018textbugger}, highlighting the urgent need for more robust and explainable AI models.

Adversarial training \citep{madry2017towards} has proven to be an effective strategy for enhancing the robustness of DNNs by generating adversarial examples during training and optimizing model weights to minimize the losses caused by these examples. Despite promising results, no certified models have been deployed commercially due to prohibitive computational requirements. Recent studies have shown that robustness can be further improved by introducing additional examples from other datasets \citep{carmon2019unlabeled} or using generative models \citep{gowal2021improving, wang2023better} to cover low-frequency data. However, these approaches require datasets 100 or more times larger than the original, which significantly escalates computational demands by orders of magnitude. This creates a critical trade-off between training cost and robustness, presenting a significant obstacle to deploying robust DNN-based applications commercially are at stake.

The focus of this paper is to explore an alternative approach that reduces training cost while achieving robustness comparable to state-of-the-art adversarial training frameworks. To address this issue, we first revisit the theorem of Lipschitz continuity, which gauges how much outputs are amplified by perturbations. Unlike certified training \citep{mao2024understanding} or Lipschitz-constrained methods \citep{zuhlke2024adversarial} approaches that primarily aim to tighten upper bounds but often face scalability issues with larger models and datasets, our approach takes a fundamentally different direction. We prove that remapping the input domain of any layer that can be represented as a linear system to a constrained range results in a bounded Lipschitz constant and robustness improvements when specific conditions are met.

The key insight is that the forged function that reshapes the input domain of a linear system should maintain high similarity between the output of the transformed layers and the original, according to specific metrics. We propose a data-driven algorithm that constructs the optimal function with only a single scan over the dataset and no need for gradient estimation, making it highly efficient. This approach stands in contrast to prior methods that typically reshape weight distributions through regularization or by introducing complex objectives, which increases the computational cost of seeking suitable hyper-parameters.

Our key contributions are outlined as follows:
\begin{itemize}
    \item \textbf{Cost-Effective Robustness Enhancement:} We propose a method that enhance the robustness of existing adversarially trained methods with minimal additional costs. Compared to previous works, where improved robustness relied on introducing more generative data during the adversarial training phase, our method can be combined with existing models without re-training or fine-tuning. From a different perspective, for models with similar robustness, those integrated with our approach require significantly lower training cost.
    \item \textbf{Theoretical Foundation:} We provide a theoretical proof that, under specific conditions, remapping the input domain of any linear system to a constrained range leads to a bounded Lipschitz constant and improved robustness. This theoretical foundation distinguishes our work from previous Lipschitz-based approaches that focus primarily on constraining the network during training.
    \item \textbf{Inference-Time Optimization:} The proposed method is well-suited for inference-time model optimization and is almost cost-free. The function introduces only one parameter, the value of which can be determined by scanning observed data once without requiring gradient computations or fine-tuning. This makes our approach particularly valuable for practical applications where computational resources are limited.
    \item \textbf{State-of-the-Art Results:} Experimental results demonstrate that our method can be combined with various existing methods to achieve meaningful robustness improvements. Notably, models such as LTD \cite{chen2021ltd} and DefEAT \cite{chen2024data}, when combined with our method, achieve performance on par with RST-AWP \cite{wu2020adversarial}, a method that relies on extra data during training, against against adversarial examples generated by AutoAttack \citep{croce2020reliable}, a state-of-the-art ensemble attack, for CIFAR10, CIFAR100, and ImageNet datasets on the RobustBench leaderboard \citep{croce2020robustbench}. This finding supports the hypothesis that certain data samples may be safely omitted without sacrificing robustness, reinforcing the potential of developing data selection algorithms for adversarial training to reduce training cost without compromising effectiveness.
\end{itemize}

The rest of this paper is organized as follows. Section \ref{sec:bkg} introduces the background on adversarial attacks and adversarial training. Section \ref{sec:method} presents the theoretical proof of how robustness is enhanced by manipulating the domain of linear functions and introduces the proposed algorithm. Section \ref{sec:exp} shows the experimental results, including the comparisons among related works. Due to page limitations, ablation studies on various hyper-parameters, combination with different activation functions and gradient masking verification are left in Appendix. The last section is our conclusion.

\section{Related Works}
\label{sec:bkg}
\subsection{Adversarial Attacks}
Adversarial attacks aim to inject tiny perturbations into inputs, causing victim DNNs to output incorrect predictions with high confidence \citep{chen2022adversarial}. These attacks have been observed in numerous vision applications \citep{goodfellow2014explaining,chen2018ead,wang2022di,yin2022adc}. Furthermore, these tiny perturbations can be embedded not only in image pixels but also in textual contexts \citep{kumar2023certifying, yao2023llm}, audio space \citep{xie2021enabling}, and other fields \citep{ilahi2021challenges}. Some research has shown how adversarial attacks threaten real applications \citep{xu2020adversarial,komkov2021advhat, du2022physical, wei2022adversarial}. Investigating the vulnerability of DNNs theoretically and designing robust DNNs is an ongoing challenge.

Adversarial attacks can be classified into two types based on the amount of information the attacker has access to: white-box attacks and black-box attacks. In the case of white-box attacks, the attacker has full access to all information about the victim model. Methods such as PGD attack \citep{madry2017towards} and AutoAttack \citep{croce2020reliable} generate adversarial examples by leveraging the gradient direction. Although this scenario is often unrealistic in practical settings, research in this area is valuable for developing more robust models in the future.

In contrast, black-box attacks assess the risks of adversarial attacks in real-world scenarios where attackers have limited access to the model's internals. Square Attack \citep{andriushchenko2020square} and ZO-NGD \citep{zhao2020towards} demonstrate that adversarial examples can be generated by querying the model's output predictions. Additionally, adversarial examples can be crafted through transferability, where models with similar architectures are used to generate adversarial examples that remain effective across different models \citep{wang2021admix, chen2024steal}.

\subsection{Defensive Strategies}
Adversarial training is a defensive strategy that aims to find optimal weights against adversarial attacks. It achieves this by generating adversarial examples on the fly during the training phase and optimizing the model's weights to minimize the losses caused by these examples. Despite the superior robustness achieved by adversarial training, the associated training costs of adversarially trained models are generally ten times more expensive than those of models trained utilizing a standard policy. The concern over high computational costs becomes a significant obstacle in deploying DNN-based applications.

Balancing between training cost and robustness is a challenge for adversarial training. Fast adversarial training has been proposed for applications pursuing higher robustness under a limited budget \citep{chen2020towards, zhang2022revisiting}. However, numerous adversarial examples cannot be drawn from these approaches, potentially leading to catastrophic overfitting, where robust accuracy significantly decreases without warning signs \citep{rice2020overfitting}. On the contrary, some studies attempted to refine robustness by introducing additional examples from other datasets \citep{carmon2019unlabeled} or using generative models \citep{gowal2021improving, wang2023better}. Alternatively, another line of research has demonstrated that the removal of partial adversarial examples does not compromise robust accuracy, addressing the issue of unaffordable training costs \citep{zhang2020attacks, chen2024data}.

Despite the potential of adversarial training to enhance model robustness, budgetary constraints often limit the scope of their crafting to one or two specific attack types during the training stage. This restricted approach may inadvertently render adversarially trained models susceptible to novel, unseen attacks. As an alternative, Lipschitz-based certified training offers a theoretical framework for ensuring an upper bound on prediction errors \citep{gowal2018effectiveness,huang2021training,muller2022certified}. However, it is important to acknowledge that these training methods often suffer from scalability issues.

\section{Methodology}
\label{sec:method}
The main goal of this paper is to enhance the robustness of adversarially trained models by leveraging Lipschitz continuity theory to remap the input domain. We propose a data-driven approach that aachieves state-of-the-art performance with minimal extra computational cost, effectively balancing model resilience and training efficiency.

\subsection{Lipschitz Continuity}
To achieve our goal, we introduce a quantitative metric known as the Lipschitz constant, which gauges how much outputs are amplified by the perturbations within the input domain. The mathematical definition is as follows, a function $f: \mathbb{R}^m \rightarrow \mathbb{R}^n$ is globally Lipschitz continuous if there exists an constant $k \geq 0$ such that
\begin{equation}
\label{eq:def_lip}
    D_f(f(x_1), f(x_2)) \leq k D_x( x_1, x_2) \quad \forall x_1, x_2 \in \mathbb{R}^m,
\end{equation}
where $D_x$ is a metric on the domain of $f$; $D_f$ is a metric on the range of $f$; and $x_1 \neq x_2$. For a DNN, it can be considered as a composite function:
\begin{equation}
    F(x) = (f_1 \circ f_2 \circ \dots \circ f_L)(x),
\end{equation}
where $f_i$ is the function of $i$-th layer. If there exists a Lipschitz constant for each individual layer, we can derive an upper bound of the Lipschitz constant for the victim model as follows,
\begin{equation}
\label{eq:Lip_F}
    k_{F} \leq \prod_{i=1}^{L} k_{i},
\end{equation}
where $k_i$ is the Lipschitz constant of $f_i$.

By defining adversarial examples $x^{\text{adv}}$ within a $\epsilon$-ball centered at an image $x$ as the inputs of (\ref{eq:def_lip}), we can assess the impact caused by adversarial examples. Therefore, the Lipschitz constant serves as a bridge that connects the design of robust models with the measurement of risks posed by adversarial examples. A small Lipschitz constant for the victim model implies that the increase in loss is minimal, indicating a higher ability to resist adversarial attacks. Consequently, the objective of this paper is to lower the upper bound of Lipschitz constant for the given models.

As indicated by previous studies \citep{yoshida2017spectral,farnia2018generalizable}, Lipschitz constant of the given model defined in (\ref{eq:Lip_F}) can be minimized by reducing the output discrepancy of individual linear layers. Under the $L_2$ norm, we have
\begin{align}
    \frac{|| f(x^{\text{adv}}) - f(x)||_2}{||x^{\text{adv}} - x||_2} &= \frac{(||Wx^{\text{adv}} + b) - (Wx + b)||_2}{||\delta||_2} \nonumber \\
    &= \frac{|| W\delta ||_2}{||\delta||_2},
\end{align}
where $W$ is the weight matrix; and $\delta$ is the distance between $x^{\text{adv}}$ and $x$. Therefore, the original optimization problem of minimizing Lipschitz constant is transformed into the following minimization problem:
\begin{equation}
\label{eq:min_eigen}
    \min_W \max_{\delta \neq 0, \delta \in \mathbb{R}^m} \frac{|| W\delta ||_2}{||\delta||_2} \quad \text{or}\quad \min_W \sigma_{\text{max}}(W),
\end{equation}
where $\sigma_{\text{max}}(W)$ represents the largest singular value of the matrix $W$. Notably, there is a relation to eigenvalues:
\begin{equation}
    \sigma^2_i(W) = \lambda_i(W W^\dagger) = \lambda_i(W^\dagger W),
\end{equation}
where $W^\dagger$ is the conjugate transpose of $W$. Each singular value of the matrix $W$ is the square root of the eigenvalue of the matrices $WW^\dagger$ or $W^\dagger W$. In other words, minimizing $\lambda_{\text{max}}(W W^\dagger)$, the largest eigenvalue of the matrices, can achieve the same objective.

\subsection{Forged Function}
\label{sec:ours_algo}

One approach to improving the robustness of a neural network is to minimize the objective function defined in (\ref{eq:min_eigen}) to obtain an optimal weight matrix. However, we argue that the largest singular value provides only a loose bound for the Lipschitz constant, as adversarial examples typically lie within an $\epsilon$-ball with a small radius, rather than spanning the entire input space. This observation suggests an alternative strategy: remapping the input domain to a constrained range, which effectively shrinks the reachable space. The restricted set should suppress values in specific dimensions most affected by adversarial examples, thereby minimizing the distribution mismatch between natural and adversarial data, all while maintaining accuracy. Therefore, the problem can be decomposed into two parts: (a) identifying an appropriate constrained range, and (b) transforming it into a corresponding weight matrix that ensures a bounded Lipschitz constant.

We propos a forged function for each layer defined as follows:
\begin{equation}
\label{eq:forge}
  f^{\text{forge}}_i(x) =
    \begin{cases}
      x f^{\text{masked}}_i(x) \quad & \text{if} \quad |x| \leq c^{\text{th}}_i, \\
      x \quad & \text{otherwise},
    \end{cases} 
\end{equation}
where $c^{\text{th}}_i$ is a threshold for the $i$-th layer; $f^{\text{masked}}_i(x)$ is a scaling function with values in the range $[0,1]$. Compared with the original input, the range of the forged function is suppressed if its value is less than the threshold. When $c^{\text{th}}_i$ is set to $0$, the forged function degrades into the original function.

The simplest definition of scaling function is a step function:
\begin{equation}
\label{eq:forge_step}
  f^{\text{step}}(x) =
    \begin{cases}
      0 \quad & |x| \leq c^{\text{th}}_i, \\
      1 \quad & \text{otherwise}.
    \end{cases} 
\end{equation}
However, discontinuous functions might cause unreliable gradients, potentially exposing the defense to BPDA attacks \citep{athalye2018obfuscated}. Therefore, we propose a smooth function that gradually reduces the value to $0$, defined as:
\begin{equation}
\label{eq:forge_smooth}
  f^{\text{smooth}} = \frac{1}{1 + \exp(-ax+b)},
\end{equation}
where $a$ controls the slope, and $b$ determines the location of the transition zone. Alternatively, a piecewise linear function can simplify this behavior:
\begin{equation}
\label{eq:forge_piecewise}
  f^{\text{piecewise}}(x) =
    \begin{cases}
      0 \quad & |x| \leq dc^{\text{th}}_i, \\
      \frac{|x|- dc^{\text{th}}_i}{(1-d)c^{\text{th}}_i}  \quad & dc^{\text{th}}_i \leq |x| \leq c^{\text{th}}_i,\\
      1 \quad & \text{otherwise},
    \end{cases} 
\end{equation}
where $d$ is within the range $[0, 1]$. In (\ref{eq:forge_piecewise}), the transition zone is represented by a linear function, and as a result, the composed $f^{\text{forge}}$ in (\ref{eq:forge}) becomes a quadratic function within this zone.

\begin{figure}
  \centering
  \begin{subfigure}[t]{0.50\linewidth}
    \centering
    \includegraphics[trim=85 50 510 50,clip=true, height=5cm]{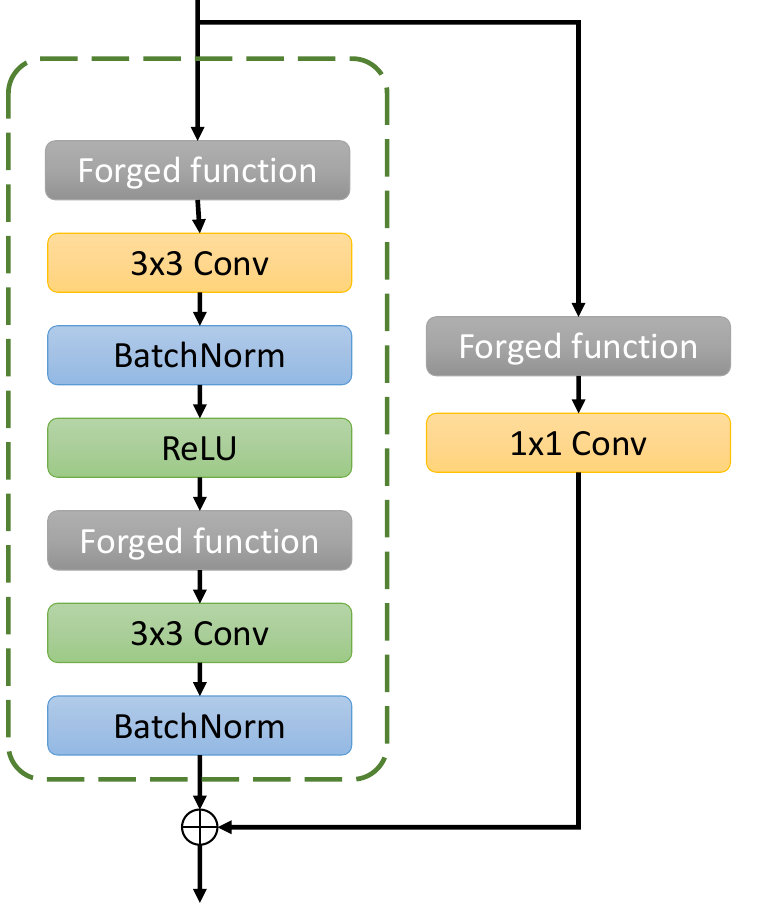}
    \caption{ConvNet}
    \label{fig:cnn}
  \end{subfigure}
  \quad
  \begin{subfigure}[t]{0.40\linewidth}
    \centering
    \includegraphics[trim=650 50 100 50,clip=true, height=5cm]{fig_layer.pdf}
    \caption{Transformer}
    \label{fig:mlp}
  \end{subfigure}
  \caption{Insertion points of the forged function. In ConvNets, it is inserted into the residual blocks, while in Transformers, it is inserted into the MLP layers.}
  \label{fig:forge_in_models}
\end{figure}

Figure \ref{fig:forge_in_models} provides a visual representation of potential insertion points for the forged function, while maintaining the integrity of other layers. For the ResNet architecture, the forged function is placed before the convolutional layers in each residual block. Similarly, for vision transformer architectures, the structure of MLP layers is adapted to seamlessly integrate the forged function. However, although the Query (Q), Key (K), and Value (V) computations in the attention layers involve linear projections, we do not apply the forged function to these components due to the more complex implementation involved.

To understand how the forged function influences the Lipschitz constant, we begin by recalling the Gershgorin Circle Theorem:
\begin{theorem}
\label{theorem:GCT}(Gershgorin Circle Theorem)
For an $m \times m$ matrix $A$ with entries $a_{ij}$, each eigenvalue of $A$ is in at least one of the disk:
\begin{equation}
    R_i = \{ z \in \mathbb{C} : |z - a_{ii}| \leq \sum_{i \neq j}|a_{ij}| \} \quad \mathrm{for} \quad i = \{1, 2, \ldots, m\}.
\end{equation}
\end{theorem}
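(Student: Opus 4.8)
The plan is to work directly from the eigenvalue equation and exploit the dominant component of an associated eigenvector. Let $\lambda$ be an arbitrary eigenvalue of $A$, and let $v = (v_1, \ldots, v_m)^\top \neq 0$ be a corresponding eigenvector, so that $Av = \lambda v$. The goal is to show that $\lambda \in R_i$ for at least one index $i$, which I would establish by producing a single specific index where the Gershgorin inequality holds, rather than arguing about all disks simultaneously.

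First I would select the index $i$ for which the component $|v_i|$ is maximal, i.e. $|v_i| \geq |v_j|$ for every $j$. Since $v \neq 0$, this guarantees $|v_i| > 0$, which is what makes the later division legitimate. I expect the choice of this dominant coordinate to be the crux of the argument: it is the step that turns an otherwise intractable bound into a clean one, because it lets every off-diagonal term $|v_j|$ be controlled uniformly by $|v_i|$.

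Next I would isolate the $i$-th scalar equation of $Av = \lambda v$, namely $\sum_{j} a_{ij} v_j = \lambda v_i$, split off the diagonal contribution, and rearrange to obtain
\begin{equation}
(\lambda - a_{ii}) v_i = \sum_{j \neq i} a_{ij} v_j.
\end{equation}
Taking absolute values and applying the triangle inequality on the right-hand side gives $|\lambda - a_{ii}|\,|v_i| \leq \sum_{j \neq i} |a_{ij}|\,|v_j|$. Here the maximality of $|v_i|$ enters: bounding each $|v_j|$ by $|v_i|$ yields $|\lambda - a_{ii}|\,|v_i| \leq |v_i| \sum_{j \neq i} |a_{ij}|$.

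Finally I would divide both sides by $|v_i| > 0$ to conclude $|\lambda - a_{ii}| \leq \sum_{j \neq i} |a_{ij}|$, which is precisely the statement that $\lambda \in R_i$. Since $\lambda$ was an arbitrary eigenvalue, every eigenvalue lies in at least one disk, completing the proof. The argument is elementary once the dominant-coordinate trick is in place; the only subtlety worth flagging is ensuring $|v_i| > 0$ before dividing, which the nonvanishing of the eigenvector secures.
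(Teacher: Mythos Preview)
Your argument is correct and is precisely the standard proof of Gershgorin's circle theorem: pick an eigenvector, look at its coordinate of largest modulus, isolate that row of $Av=\lambda v$, apply the triangle inequality, and divide through by $|v_i|>0$. There is nothing to add or fix.

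As for comparison with the paper: the paper does not actually supply a proof of Theorem~\ref{theorem:GCT}. It is quoted as a classical result and then invoked as a tool in the proof of Lemma~\ref{lemma:ours}. So there is no ``paper's own proof'' to contrast your proposal against; your write-up simply fills in what the paper takes for granted.
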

Theorem \ref{theorem:GCT} indicates each row vector can be represented as a disk which is centered at the diagonal entry $a_{ii}$ and whose radius is the sum of the off-diagonal entries $a_{ij}$. For any layer which can be represented by a linear system, such as convolutional or fully connected layers, robustness can be improved by shrinking the radius of the disk with the largest eigenvalue.

Now, let $W$ be the weight of the target layer, which can be represented by an $m \times n$ matrix, and $\textbf{t}$ be the input vector. Without loss of generality, we assume that $A = W^\dagger W$ and $f^{\text{forge}}(\textbf{t})$ is defined as:
\begin{equation}
f^{\text{forge}}(t_i) =
    \begin{cases}
      0 \quad i \leq h \\
      t_i \quad \text{otherwise},
    \end{cases}
\end{equation}
where $t_i$ is the $i$-th element of the vector $\textbf t$ and $h$ is a positive number.

\begin{lemma}
\label{lemma:ours}
There exists a matrix $A'$ whose largest eigenvalue, $\lambda_{\text{max}}(A')$, is less than or equal to the largest eigenvalue of $A$, $\lambda_{\text{max}}(A)$, if 
\begin{equation}
\label{eq:th_cond}
    Af^{\text{forge}}(\textbf{t})= A'\textbf{t}.
\end{equation}
\end{lemma}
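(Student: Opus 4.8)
The plan is to recognise that the forged function acts as a coordinate projection and to convert the eigenvalue comparison into a Rayleigh-quotient inequality. Since $f^{\text{forge}}$ zeroes out the first $k$ entries and leaves the rest untouched, I would write $f^{\text{forge}}(\textbf{t}) = P\textbf{t}$, where $P = \operatorname{diag}(0,\ldots,0,1,\ldots,1)$ is the diagonal matrix with $k$ leading zeros followed by $n-k$ ones. Then $P$ is a Hermitian orthogonal projection ($P^\dagger = P$, $P^2 = P$), and requiring the defining condition (\ref{eq:th_cond}) to hold for every $\textbf{t}$ forces the explicit choice $A' = AP$. So the whole statement reduces to showing $\lambda_{\text{max}}(AP) \le \lambda_{\text{max}}(A)$.

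First I would address that $A' = AP$ is generally not Hermitian, so the phrase ``largest eigenvalue'' needs justification. To handle this I would relate the spectrum of $AP$ to that of the genuinely Hermitian matrix $PAP$. Since $P^2 = P$, the three $n\times n$ matrices $AP$, $PA$, and $PAP$ share a common characteristic polynomial: $AP$ and $PA$ do by the standard fact that $XY$ and $YX$ have identical characteristic polynomials, and applying the same fact with $X = PA$, $Y = P$ (so that $XY = PAP$ while $YX = P^2A = PA$) handles $PAP$. Because $A = W^\dagger W \succeq 0$, the matrix $PAP$ is Hermitian positive semidefinite, as $v^\dagger PAP v = (Pv)^\dagger A (Pv) = \|W P v\|_2^2 \ge 0$. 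Hence every eigenvalue of $A' = AP$ is real and nonnegative, and in particular $\lambda_{\text{max}}(A') = \lambda_{\text{max}}(PAP)$.

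The final step is a variational argument using the contraction property of $P$. Because $PAP$ is Hermitian, $\lambda_{\text{max}}(PAP) = \max_{\|v\|_2 = 1} v^\dagger PAP v = \max_{\|v\|_2 = 1} (Pv)^\dagger A (Pv)$. Since $P$ merely deletes coordinates it is a contraction, $\|Pv\|_2 \le \|v\|_2$, so writing $u = Pv/\|Pv\|_2$ whenever $Pv \neq 0$ gives $(Pv)^\dagger A (Pv) = \|Pv\|_2^2\, u^\dagger A u \le \|Pv\|_2^2\, \lambda_{\text{max}}(A) \le \lambda_{\text{max}}(A)$, where the last inequality uses $\|Pv\|_2 \le 1$ and $\lambda_{\text{max}}(A) \ge 0$ (the case $Pv=0$ is trivial). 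Taking the maximum over unit $v$ yields $\lambda_{\text{max}}(A') = \lambda_{\text{max}}(PAP) \le \lambda_{\text{max}}(A)$, which is exactly the claim.

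I expect the main obstacle to be the non-symmetry of $A' = AP$: establishing that its largest eigenvalue is real and coincides with that of the Hermitian matrix $PAP$ is the one non-routine ingredient, after which the contraction estimate is immediate. A secondary subtlety worth flagging is the wording ``there exists a matrix $A'$''\,: since (\ref{eq:th_cond}) must hold for all $\textbf{t}$ it pins $A' = AP$ down uniquely, so the existence assertion is really the inequality itself. If one instead only cares about the quadratic form controlling $\|W f^{\text{forge}}(\textbf{t})\|_2$, the symmetric representative $A' = PAP$ is the cleaner object and yields the same bound with the same argument.
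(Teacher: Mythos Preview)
Your argument is correct and takes a genuinely different route from the paper. The paper constructs the same $A'$ (zeroing the first $k$ columns of $A$, i.e.\ $A' = AP$ in your notation) but then argues via the Gershgorin Circle Theorem: zeroing those columns moves the centres of the first $k$ Gershgorin disks to the origin and shrinks the off-diagonal row sums of every disk, so the Gershgorin upper bound on $\lambda_{\max}(A')$ is no larger than that on $\lambda_{\max}(A)$. Your approach instead identifies the spectrum of $A' = AP$ with that of the Hermitian matrix $PAP$ through the shared characteristic polynomial and finishes with a Rayleigh-quotient contraction estimate. What your route buys is rigour on two points the paper leaves implicit: it certifies that the eigenvalues of the non-Hermitian $A'$ are real and nonnegative (by matching them to those of the positive semidefinite $PAP$), and it establishes the inequality for the eigenvalues themselves rather than merely for their Gershgorin envelopes. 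The paper's Gershgorin argument is more elementary and ties back to the disk-shrinking intuition it set up earlier, but strictly speaking it only compares upper bounds on the spectra, not the spectra directly.
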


\begin{proof}
Since the first $h$ entries of the vector $\textbf{t}$ are replaced with zeros, above condition can be achieved by replacing the corresponding column vectors of the matrix $A$ with zero vectors. Therefore, the entries of $A'$ are formulated as
\begin{equation}
\label{eq:def_ae_entries}
 a'_{ij} =
    \begin{cases}
      0 \quad j \leq h \\
      a_{ij} \quad \text{otherwise}.
    \end{cases}
\end{equation}
The matrix $A$ is a positive semidefinite matrix, implying that the diagonal entries are non-negative. Moreover, with the entry representation of $A'$ in (\ref{eq:def_ae_entries}), we observe that modifications are only applied to the first $h$ columns, while the rest remain unchanged. Combining the Gershgorin Circle Theorem, we know that the centers of the first $h$ disks of the matrix $A'$ are shifted towards zero. Additionally, the radii of all disks, the absolute values of the off-diagonal entries in $A'$, are shrunk. Consequently, the upper bound of the largest eigenvalue of the matrix $A'$ is tighter compared to that of the original matrix $A$.
\end{proof}

It is important to note that the proof presented above is specified for each individual input vector, with each input having its own corresponding matrix $A'$ that ensures a bounded Lipschitz constant. This input-specific proof allows for precise theoretical guarantees on robustness. While the corresponding matrix $A'$ is auxiliary and used primarily for mathematical proof purposes, there is no need to explicitly compute this matrix during the inference phase. 

\subsection{Data-driven Algorithm}
The choice of a proper $c^{\text{th}}_i$ represents a critical balance in our approach. A larger threshold more effectively constrains the adversarial space and reduces the Lipschitz constant. However, excessive restriction can significantly distort the output distribution, potentially degrading overall accuracy on natural examples.

In this paper, we propose a data-driven approach for determining optimal value through the following equation: 
\begin{equation}
\label{eq:def_cth}
    c^{\text{th}}_i = c^r \max(F_{1 \rightarrow i}(x)) \quad \forall x \in \mathcal{S},
\end{equation}
where $\mathcal{S}$ can include all or a subset of images in the training set, $c^r$ is a positive number within the range $[0,1]$ and $F_{1 \rightarrow i}(x)$ represents the output of the $i$-th layer. Specifically, each layer has its own $c^{\text{th}}_i$, but they share the same hyper-parameter $c^r$. 

Algorithm \ref{algo:forged_algo} specifies the implementation details of the forged function. The variable $b$ is used to store the maximum value that appeared in $\mathcal{S}$, as defined in (\ref{eq:def_cth}), and is initialized during construction. Similar to the implementation of the batchnorm layer, the behavior is depended on the mode configuration. When the mode is set to tracking mode, the variable $b$ is updated accordingly, and the input is set to the output without any modification. Conversely, when the mode is set to inference mode, the value of $b$ is frozen, but the input is updated as defined by (\ref{eq:forge}). By default, the mode is set to inference, and the values of $b$ and $c^r$ are zero, respectively. As a result, the mask $x^{\text{mask}}$ are set to $1$, and the algorithm is degraded to the identical function.

Our method offers significant efficiency advantages in practical implementation as the threshold $c^{\text{th}}_i$ can be obtained by scanning all data in the set $\mathcal{S}$ once in track mode beforehand. This process requires no gradient computations and adds minimal computational overhead—typically just minutes even on consumer-grade GPUs. Compared to the substantial costs of adversarial training, our approach is essentially cost-free while providing meaningful robustness improvements. Conceptually, the proposed forged function operates analogously to a ReLU function, as it suppresses output values within a specific range, but with a key distinction: the defined range in our forged function is adaptively determined based on the characteristics of the observed dataset rather than being fixed. This data-driven adaptation enables our method to enhance robustness while preserving performance on natural examples.

\begin{algorithm}[t]
\caption{Forged Function}
\label{algo:forged_algo}
\begin{algorithmic}[1]
\STATE \textbf{require}: Input $\textbf{x}$,  Mode $m$, Hyper-parameter $c^r$
\IF{$m$ is tracking mode}
    \STATE {$b = \max(b, |\textbf{x}|)$}
\ELSE
    \FORALL{$x_i \in \textbf{x}$}
        \IF{$\text{abs}(x_i) \leq c^r b$}
            \STATE {$x^{\text{mask}} = f^{\text{masked}}(x)$}
        \ELSE
            \STATE {$x^{\text{mask}}$ = 1}
        \ENDIF
        \STATE {$x_i \leftarrow x^{\text{mask}} x_i$}
    \ENDFOR
\ENDIF

\STATE {\textbf{return} $\textbf{x}$}
\end{algorithmic}
\end{algorithm}

\section{Experiments}
\label{sec:exp}
\subsection{Setup}
We evaluated our method on CIFAR10, CIFAR100, and ImageNet datasets under white-box attack scenarios using AutoAttack \citep{croce2020reliable} with $L_\infty$ norm constraints ($\epsilon=8/255$ for CIFAR10/100 and $\epsilon=4/255$ for ImageNet). All baseline models were sourced from the public RobustBench repository for reproducibility.

To determine threshold values $c^{\text{th}}_i$, we processed the complete training sets for CIFAR10 and CIFAR100 without data augmentation. For the larger ImageNet dataset, scanning all 1.2 million training images would require substantial time. Alternatively, we randomly sampled approximately 5,000 representative images to establish thresholds efficiently. Ideally, determining the optimal choice of $c^r$ requires conducting an ablation study to explore the relationship between the chosen $c^r$ and robust accuracy on a validation set. To accelerate this procedure, we first seek a value of $c^r$ with the highest standard accuracy. The candidate values are selected in a small range centered around this value.

Since most models are represented in 16 bit format, and the widths of fraction bit for FP16 format defined by IEEE-754 standard and BFloat are 10 and 7 bits, respectively, truncated errors might easily occur when performing addition on two numbers with a magnitude difference of $2^8$ or higher. On the other hand, when $c^r$ is set to $2^{-5}$, all models experience a significant drop in standard accuracy, and there is meaningless in evaluating robustness at this configuration. We suggest that the candidates of $c^r$ are $2^{-8}$, $2^{-7}$ and $2^{-6}$ and the partial results are presented in Table \ref{table:partial_select_cr_CIFAR10}.

Our experimental investigation included ablation studies examining: optimal scaling coefficient selection, compatibility with various adversarially trained models, verification against gradient masking, and certified robustness through randomized smoothing. Complete ablation study results are presented in Appendix \ref{sec:appd:exp} due to space constraints.

\subsection{White-box Evaluation}
\begin{table}[t]
    \centering
    \begin{subtable}[t]{.99\linewidth}

        \centering
        \begin{tabular}{ccc|cc}
            \toprule
            \multirow{2}{*}{Method} & \multicolumn{2}{c|}{Original} & \multicolumn{2}{c}{Original+Ours} \\
                                    & acc\textsubscript {nat} & acc\textsubscript {AA}      & acc\textsubscript {nat} & acc\textsubscript {AA}      \\
              \midrule
            RST-AWP \cite{wu2020adversarial} & 88.25 & 60.04 & 89.50 & 62.76 \\
            DefEAT  \cite{chen2024data}      & 86.54 & 57.30 & 87.40 & 59.55 \\
            LTD     \cite{chen2021ltd}       & 85.21 & 56.94 & 85.98 & 59.25 \\
            AWP     \cite{wu2020adversarial} & 85.36 & 56.17 & 86.19 & 57.85 \\
              \midrule
            TRADES\cite{zhang2019theoretically} & 85.34 & 52.86 & 85.78 & 53.80 \\
            \bottomrule
        \end{tabular}
        \caption{CIFAR10 dataset}
        \label{table:acc_ours_CIFAR10}
    \end{subtable}%
    \newline
    \bigskip
    \newline
    \begin{subtable}[t]{.99\linewidth}
        \centering
        
        \begin{tabular}{ccc|cc}
            \toprule
            \multirow{2}{*}{Method} & \multicolumn{2}{c|}{Original} & \multicolumn{2}{c}{Original+Ours} \\
                                    & acc\textsubscript {nat} & acc\textsubscript {AA}      & acc\textsubscript {nat} & acc\textsubscript {AA}      \\
              \midrule
            EffAug  \cite{addepalli2022efficient} & 68.75 & 31.85 & 69.14 & 32.57 \\ 
            DKLD    \cite{cui2023decoupled}       & 64.08 & 31.65 & 64.26 & 32.58 \\
            DefEAT  \cite{chen2024data}           & 64.32 & 31.13 & 66.42 & 32.11 \\
            LTD     \cite{chen2021ltd}            & 64.07 & 30.59 & 64.29 & 31.95 \\
            AWP     \cite{wu2020adversarial}      & 60.38 & 28.86 & 60.63 & 29.72 \\
            \bottomrule
        \end{tabular}
        \caption{CIFAR100 dataset}
        \label{table:acc_ours_CIFAR100}
    \end{subtable}
    \caption{Standard and robust accuracy of models integrated with our method.}
\end{table}

This experiment evaluates the generalizability of the proposed method on adversarially trained models with the same architecture but trained using different strategies. Additionally, we assess the potential cost reduction from adversarial training. We integrated our approach with models selected from RobustBench, trained using various techniques such as adding perturbations in internal layers, retrieving information using knowledge distillation, reducing inefficient training data, or involving additional images from generated models or another dataset. Except for the WRN-28-10 model used in RST-WAP, all models were based on the popular WRN-34-10 with ReLU, the leading architecture on the RobustBench leaderboard \citep{croce2020robustbench}. The value of $c^r$ is set to $2^{-7}$.

Table \ref{table:acc_ours_CIFAR10} and \ref{table:acc_ours_CIFAR100} present standard and robust accuracy of models integrated with our method for CIFAR10 and CIFAR100 dataset, respectively. In these tables, \textit{acc\textsubscript{nat}} and \textit{acc\textsubscript{AA}} refer to accuracy on clean data and adversarial examples generated by AutoAttack, respectively. The column \textit{Original} indicates the original results reported by RobustBench, and the column \textit{Original+Ours} demonstrates the results of the proposed method. As indicated in these tables, for CIFAR10 dataset, the proposed method enhances robust accuracy by more than $2\%$ for RST-AWP, DefEAT, and LTD models, while other models receive approximately $1$ to $1.5\%$ improvement in robustness. Similarly, for CIFAR100 dataset, these models meet at least a $1\%$ increase in robustness. The empirical results prove that the resilience of existing models against adversarial attacks can be improved by Lemma \ref{lemma:ours}. We believe that the proposed solution is general as it achieves great success in models incorporating different training techniques.

Additionally, models integrated with our approach incur significantly lower training costs compared to models with similar robustness. For example, combining the LefEAT model with our method achieves a robust accuracy of 59.55\% can be achieved, which is comparable to RST-AWP (60.04\%). However, RST-AWP introduces more images from another dataset, resulting in a higher cost in each epoch. Similarly, for the CIFAR100 dataset, DefEAT with our proposed method achieves a robust accuracy of 32.11\%, which is better than EffAug (31.85\%), which involves more complex data augmentation during the training stage. This result supports DefEAT's suggestion that unnecessary data can be eliminated without sacrificing robustness. Overall, we believe that our approach offers a valuable strategy for pruning inefficient weights in the later stages of adversarial training, potentially leading to further cost savings or enhanced resilience.

An interesting observation from the tables is that standard accuracy improves for all models on both CIFAR10 and CIFAR100. While this is not directly explained by Lemma \ref{lemma:ours}, we hypothesize that the output ranges of ReLU and our proposed function are similar, helping maintain accuracy on clean data.

Tables \ref{table:top_3_robustbench_cifar10}, \ref{table:top_3_robustbench_cifar100}, and \ref{table:top_3_robustbench_imagenet} present the top-3 models on RobustBench for the CIFAR10, CIFAR100, and ImageNet datasets, respectively. The rankings are denoted by $\#$, with our results marked by an asterisk (*).

For CIFAR10 and CIFAR100 datasets, we used the WRN-70-16 architecture with SiLU activations, incorporating generative data during training. As shown, our approach, when applied to the WRN-70-16 model with SiLU, improves robustness by at least 0.9\%, achieving the best results on RobustBench for both datasets. However, there is a slight decrease in standard accuracy (\textit{acc\textsubscript{nat}}). Several factors may explain this outcome. For instance, the additional hyperparameter introduced in this study might not offer sufficient granularity to optimize all layers of the target model effectively. Although the proposed method was originally designed for ReLU activations, these results indicate its applicability to other activation functions. Future work could explore the design of more specialized functions to further optimize performance with different activation function.

On the other hand, For the ImageNet dataset, we employed the Swin model \citep{liu2021swin}, a transformer-based architecture. Experimental results show that the Swin-L model with GELU activations, combined with our method, improves robust accuracy and achieves the best result, while standard accuracy sees only a marginal decrease. This suggests that our approach is effective across both convolutional and transformer-based architectures.

\begin{table}[t]
  \centering
  \begin{subtable}[t]{.99\linewidth}
      \centering
      \begin{tabular}{c|cccc}
          \toprule
          \# & Method & Architecture & acc\textsubscript {nat} & acc\textsubscript {AA} \\
          \midrule
          * & \cite{wang2023better} + Ours & WRN-70-16 & 93.20 & \textbf{71.70} \\ 
          1 & \cite{peng2023robust}        & WRN-70-16 & 93.27 & 71.07 \\
          2 & \cite{wang2023better}        & WRN-70-16 & 93.25 & 70.69 \\
          3 & \cite{bai2024mixednuts}      & WRN-70-16 & \textbf{95.19} & 69.71 \\
        \bottomrule
      \end{tabular}
      \caption{CIFAR10 dataset}
      \label{table:top_3_robustbench_cifar10}
  \end{subtable}%
    \newline
    \bigskip
    \newline
  \begin{subtable}[t]{.99\linewidth}
      \centering

      \begin{tabular}{c|cccc}
        \toprule
        \# & Method & Architecture & acc\textsubscript {nat} & acc\textsubscript {AA} \\
        \midrule
          * & \cite{wang2023better} + Ours & WRN-70-16 & 74.97 & \textbf{44.00} \\ 
          1 & \cite{wang2023better}        & WRN-70-16 & \textbf{75.22} & 42.67 \\
          2 & \cite{bai2024mixednuts}      & WRN-70-16 & 83.08 & 41.80 \\
          3 & \cite{cui2023decoupled}      & WRN-70-16 & 73.85 & 39.18 \\
      \bottomrule
    \end{tabular}
    \caption{CIFAR100 dataset}
    \label{table:top_3_robustbench_cifar100}
  \end{subtable}
  \begin{subtable}[t]{.99\linewidth}
      \centering
      \begin{adjustbox}{width=1.0\linewidth}
      \begin{tabular}{c|cccc}
          \toprule
          \# & Method & Architecutre & acc\textsubscript {nat} & acc\textsubscript {AA} \\
          \midrule
          * & \cite{liu2023comprehensive} + Ours & Swin-L                & 78.88 & \textbf{60.04} \\ 
          1 & \cite{liu2023comprehensive}        & Swin-L                & 78.92 & 59.56 \\
          2 & \cite{bai2024mixednuts}            & ConvNeXtV2-L + Swin-L & \textbf{81.48} & 58.50 \\
          3 & \cite{liu2023comprehensive}        & ConvNeXt-L            & 78.02 & 58.48 \\
        \bottomrule
      \end{tabular}
      \end{adjustbox}
  \caption{ImageNet dataset}
  \label{table:top_3_robustbench_imagenet}
  \end{subtable}
  \caption{The results of top-3 competitors on Robustbench.}
\end{table}

\begin{table*}[t]
\centering
\begin{tabular}{c|cc|cc|cc|cc}
\toprule
\multirow{2}{*}{Method} & \multicolumn{2}{|c|}{RobustBench} & \multicolumn{2}{c|}{$c^r=2^{-8}$} & \multicolumn{2}{c|}{$c^r=2^{-7}$} & \multicolumn{2}{c}{$c^r=2^{-6}$} \\
                        & acc\textsubscript{nat} & acc\textsubscript{AA} & acc\textsubscript{nat} & acc\textsubscript{AA} & acc\textsubscript{nat} & acc\textsubscript{AA} & acc\textsubscript{nat} & acc\textsubscript{AA}  \\
\midrule
TRADES \cite{zhang2019theoretically}                 & 85.34 & 52.86 & 85.57 & 52.97 & 85.78 & 53.80 & 85.49 & 55.37 \\
\bottomrule
\end{tabular}
  \caption{Ablation study of selecting optimal $c^r$ for CIFAR10 dataset.}
  \label{table:partial_select_cr_CIFAR10}
\end{table*}

\subsection{Cost Analysis}
For the CIFAR-10 and CIFAR-100 datasets, the total training time for AWP with WRN-34-10 on a V100 GPU is approximately 20 hours. In contrast, other methods use larger models and incorporate additional data \citep{wang2023better, peng2023robust, bai2024mixednuts}, leading to a total training time of over 200 hours. In comparison, scanning the entire CIFAR-10 or CIFAR-100 dataset, or partial training set on ImageNet dataset, takes less than 5 minutes on a V100 GPU.

Regarding the costs of hyperparameter search, as discussed in Appendix \ref{sec:hyper-para_select}, there are only three candidate parameters to evaluate performance. We can efficiently assess white-box performance using partial data from the training set, which significantly reduces the computational time. In practice, the total time for hyperparameter search is about 1-3 hours, depending on the model size and the dataset used. This demonstrates that our approach incurs significantly lower computational overhead.


\subsubsection{Gradient Masking Verification}
Previous studies suggest that the resilience of models might be unintentionally overestimated \citep{athalye2018obfuscated,carlini2019evaluating}. The proposed function in (\ref{eq:forge}) suppresses values to zero if the condition is satisfied. One might argue that this property could unintentionally cause obfuscated gradients, resulting in gradient attacks being unable to efficiently produce adversarial examples. Therefore, to verify that the proposed method does not encounter the gradient masking issue, we should conduct more experiments from the following aspects:
\begin{enumerate}
    \item\label{item:A} White-box attacks should be better than black-box attacks.
    \item\label{item:B} Iterative attacks should have better performance than one-step attacks.
    \item\label{item:C} Robust accuracy should gradually decrease to zero when the radius of $\epsilon$-ball increase.
    \item\label{item:D} The modified model should defense against adversarial examples generated by the original models.
    \item\label{item:E} Certified robustness that conducted by random smoothing \citep{cohen2019certified}.
\end{enumerate}

AutoAttack has examined the first item, which involves three white-box attacks and one black-box attack. By comparing robust accuracy shown in Table \ref{table:top_3_robustbench_cifar10}, \ref{table:top_3_robustbench_cifar100} and \ref{table:top_3_robustbench_imagenet}, the models combined with the proposed method perform better robust accuracy than the original models. It indicates black-box attacks cannot produce more adversarial examples. 

The full experimental results for the rest of the experiments can be found in Appendix \ref{sec:appd:masking}. The results demonstrate that the proposed algorithm does not violate any of the above rules and the certified robustness improves by our method across most settings. From the evidence, we believe that the proposed method does not encounter the gradient masking problem among different hyper-parameters and various models on CIFAR10 and CIFAR100 datasets. 

\subsection{Discussion}
\textbf{Comparison with Lipschitz-constrained methods} A common misconception is that minimizing the Lipschitz constant necessarily improves model robustness. However, in extreme cases — for example, replacing the weights of a linear layer with an identity matrix — the Lipschitz constant becomes 1, but the resulting model may fail to generalize, yielding zero natural accuracy and thus no meaningful robustness to assess. While certified training \citep{mao2024understanding} and Lipschitz-constrained methods \citep{zuhlke2024adversarial} can reduce the Lipschitz constant, they often introduce additional regularization terms or complex objectives, making it difficult to balance robustness and performance. In contrast, our approach enhances robustness by shrinking the space of adversarial examples, without requiring explicit minimization as in (\ref{eq:min_eigen}).

\textbf{Effective adversarial training} The proposed function can be constructed in a single pass over the training data, without generating adversarial examples. By suppressing values in regions associated with adversarial perturbations, our method reduces the distributional mismatch between clean and adversarial data. Although similar ideas have been explored in prior work \citep{bai2021improving}, such methods typically rely on explicit adversarial training. Additionally, the empirical results show that when applied to models trained without additional generative data, our method achieves robustness comparable to or better than models that use extensive supplementary data. This implicitly suggests that unnecessary data can be eliminated without without degrading prediction quality which is aligned with previous work \cite{chen2024data}.

\textbf{Structured pruning} The overall procedure of the proposed algorithm shares many similarities with pruning techniques \citep{he2023structured}. However, we would like to emphasize that the proposals are distinctly different. Pruning primarily aims to create a highly sparse model that accelerates inference times or reduces the model size for deployment on edge devices, often without considering robustness. In particular, when the pruned model exhibits extremely high sparsity without applying re-training or fine-tuning, natural accuracy drops significantly, implicitly indicating that condition (\ref{eq:th_cond}) does not hold and that Lemma \ref{lemma:ours} is not applicable in this case. On the other hand, when the pruned model has low sparsity, only a small proportion of weights with values close to zero are eliminated, resulting in insignificant adjustments to the center and radius of the disk. The proposed algorithm, on the other hand, can identify crucial elements with minimal cost. We believe that investigating the impact of various pruning techniques, such as iterative or post-training methods, on robustness, or combining these techniques with our proposed approach, represents a valuable direction for future research.

\section{Conclusion}
In this paper, we revisited robustness certification through the concept of Lipschitz continuity. While existing Lipschitz-constrained methods reduce the Lipschitz constant using regularization or complex objectives, they often struggle to strike a balance between robustness and performance. In contrast, from a theoretical perspective, we demonstrated that remapping inputs to a constrained set can mitigate the influence of regions outside the real data distribution, leading to improved robustness.

Our method is data-driven and highly efficient, requiring only a single pass through the dataset to determine the appropriate parameters for constructing the forged function without fine-tuning or gradient computations. This procedure is computationally efficient and almost cost-free. The experimental results highlight that our method can be seamlessly integrated with various existing techniques to enhance robustness. When combined with models trained without additional generative data, our method achieves robustness comparable to or exceeding that of models using extensive supplementary data. These results open a promising direction for significantly reducing computational costs while maintaining or improving defensive capabilities of robust neural networks.

Numerous future directions merit exploration. For instance, investigating the integration of the proposed remapping function with different activation functions, model architectures, or large-scale datasets would be beneficial. Additionally, it would be worthwhile to explore the theoretical foundations behind the observed improvements in standard accuracy introduced by our method.

\section*{Acknowledgments}
{
We thank the founding support from National Science and Technology Council  ( 113-2622-E-007 -018 ) and the computing infrastructure provided by the National Center for High-Performance Computing, National Institutes of Applied Research (NIAR), Taiwan.

    \small
    \bibliographystyle{ieeenat_fullname}
    \bibliography{refs}

\begin{thebibliography}{56}
\providecommand{\natexlab}[1]{#1}
\providecommand{\url}[1]{\texttt{#1}}
\expandafter\ifx\csname urlstyle\endcsname\relax
  \providecommand{\doi}[1]{doi: #1}\else
  \providecommand{\doi}{doi: \begingroup \urlstyle{rm}\Url}\fi

\bibitem[Addepalli et~al.(2022)Addepalli, Jain, et~al.]{addepalli2022efficient}
Sravanti Addepalli, Samyak Jain, et~al.
\newblock Efficient and effective augmentation strategy for adversarial training.
\newblock \emph{Advances in Neural Information Processing Systems}, 35:\penalty0 1488--1501, 2022.

\bibitem[Andriushchenko et~al.(2020)Andriushchenko, Croce, Flammarion, and Hein]{andriushchenko2020square}
Maksym Andriushchenko, Francesco Croce, Nicolas Flammarion, and Matthias Hein.
\newblock Square attack: a query-efficient black-box adversarial attack via random search.
\newblock In \emph{European conference on computer vision}, pages 484--501. Springer, 2020.

\bibitem[Athalye et~al.(2018)Athalye, Carlini, and Wagner]{athalye2018obfuscated}
Anish Athalye, Nicholas Carlini, and David Wagner.
\newblock Obfuscated gradients give a false sense of security: Circumventing defenses to adversarial examples.
\newblock In \emph{International conference on machine learning}, pages 274--283. PMLR, 2018.

\bibitem[Bai et~al.(2021)Bai, Zeng, Jiang, Xia, Ma, and Wang]{bai2021improving}
Yang Bai, Yuyuan Zeng, Yong Jiang, Shu-Tao Xia, Xingjun Ma, and Yisen Wang.
\newblock Improving adversarial robustness via channel-wise activation suppressing.
\newblock \emph{arXiv preprint arXiv:2103.08307}, 2021.

\bibitem[Bai et~al.(2024)Bai, Zhou, Patel, and Sojoudi]{bai2024mixednuts}
Yatong Bai, Mo Zhou, Vishal~M Patel, and Somayeh Sojoudi.
\newblock Mixednuts: Training-free accuracy-robustness balance via nonlinearly mixed classifiers.
\newblock \emph{arXiv preprint arXiv:2402.02263}, 2024.

\bibitem[Carlini and Wagner(2018)]{carlini2018audio}
Nicholas Carlini and David Wagner.
\newblock Audio adversarial examples: Targeted attacks on speech-to-text.
\newblock In \emph{2018 IEEE Security and Privacy Workshops (SPW)}, pages 1--7. IEEE, 2018.

\bibitem[Carlini et~al.(2019)Carlini, Athalye, Papernot, Brendel, Rauber, Tsipras, Goodfellow, Madry, and Kurakin]{carlini2019evaluating}
Nicholas Carlini, Anish Athalye, Nicolas Papernot, Wieland Brendel, Jonas Rauber, Dimitris Tsipras, Ian Goodfellow, Aleksander Madry, and Alexey Kurakin.
\newblock On evaluating adversarial robustness.
\newblock \emph{arXiv preprint arXiv:1902.06705}, 2019.

\bibitem[Carmon et~al.(2019)Carmon, Raghunathan, Schmidt, Duchi, and Liang]{carmon2019unlabeled}
Yair Carmon, Aditi Raghunathan, Ludwig Schmidt, John~C Duchi, and Percy~S Liang.
\newblock Unlabeled data improves adversarial robustness.
\newblock \emph{Advances in neural information processing systems}, 32, 2019.

\bibitem[Chen and Lee(2020)]{chen2020towards}
Erh-Chung Chen and Che-Rung Lee.
\newblock Towards fast and robust adversarial training for image classification.
\newblock In \emph{Proceedings of the Asian Conference on Computer Vision}, 2020.

\bibitem[Chen and Lee(2021)]{chen2021ltd}
Erh-Chung Chen and Che-Rung Lee.
\newblock Ltd: Low temperature distillation for robust adversarial training.
\newblock \emph{arXiv preprint arXiv:2111.02331}, 2021.

\bibitem[Chen and Lee(2024)]{chen2024data}
Erh-Chung Chen and Che-Rung Lee.
\newblock Data filtering for efficient adversarial training.
\newblock \emph{Pattern Recognition}, page 110394, 2024.

\bibitem[Chen et~al.(2023)Chen, Chen, Chung, Lee, et~al.]{chen2023overload}
Erh-Chung Chen, Pin-Yu Chen, I Chung, Che-rung Lee, et~al.
\newblock Overload: Latency attacks on object detection for edge devices.
\newblock \emph{arXiv preprint arXiv:2304.05370}, 2023.

\bibitem[Chen et~al.(2024)Chen, Chen, Chung, Lee, et~al.]{chen2024steal}
Erh-Chung Chen, Pin-Yu Chen, I Chung, Che-Rung Lee, et~al.
\newblock Steal now and attack later: Evaluating robustness of object detection against black-box adversarial attacks.
\newblock \emph{arXiv preprint arXiv:2404.15881}, 2024.

\bibitem[Chen and Hsieh(2022)]{chen2022adversarial}
Pin-Yu Chen and Cho-Jui Hsieh.
\newblock \emph{Adversarial robustness for machine learning}.
\newblock Academic Press, 2022.

\bibitem[Chen et~al.(2018)Chen, Sharma, Zhang, Yi, and Hsieh]{chen2018ead}
Pin-Yu Chen, Yash Sharma, Huan Zhang, Jinfeng Yi, and Cho-Jui Hsieh.
\newblock Ead: elastic-net attacks to deep neural networks via adversarial examples.
\newblock In \emph{Proceedings of the AAAI conference on artificial intelligence}, 2018.

\bibitem[Cohen et~al.(2019)Cohen, Rosenfeld, and Kolter]{cohen2019certified}
Jeremy Cohen, Elan Rosenfeld, and Zico Kolter.
\newblock Certified adversarial robustness via randomized smoothing.
\newblock In \emph{international conference on machine learning}, pages 1310--1320. PMLR, 2019.

\bibitem[Croce and Hein(2020)]{croce2020reliable}
Francesco Croce and Matthias Hein.
\newblock Reliable evaluation of adversarial robustness with an ensemble of diverse parameter-free attacks.
\newblock In \emph{International conference on machine learning}, pages 2206--2216. PMLR, 2020.

\bibitem[Croce et~al.(2020)Croce, Andriushchenko, Sehwag, Debenedetti, Flammarion, Chiang, Mittal, and Hein]{croce2020robustbench}
Francesco Croce, Maksym Andriushchenko, Vikash Sehwag, Edoardo Debenedetti, Nicolas Flammarion, Mung Chiang, Prateek Mittal, and Matthias Hein.
\newblock Robustbench: a standardized adversarial robustness benchmark.
\newblock \emph{arXiv preprint arXiv:2010.09670}, 2020.

\bibitem[Cui et~al.(2023)Cui, Tian, Zhong, Qi, Yu, and Zhang]{cui2023decoupled}
Jiequan Cui, Zhuotao Tian, Zhisheng Zhong, Xiaojuan Qi, Bei Yu, and Hanwang Zhang.
\newblock Decoupled kullback-leibler divergence loss.
\newblock \emph{arXiv preprint arXiv:2305.13948}, 2023.

\bibitem[Du et~al.(2022)Du, Chen, Chin, Law, Sasdelli, Rajasegaran, and Campbell]{du2022physical}
Andrew Du, Bo Chen, Tat-Jun Chin, Yee~Wei Law, Michele Sasdelli, Ramesh Rajasegaran, and Dillon Campbell.
\newblock Physical adversarial attacks on an aerial imagery object detector.
\newblock In \emph{Proceedings of the IEEE/CVF Winter Conference on Applications of Computer Vision}, pages 1796--1806, 2022.

\bibitem[Farnia et~al.(2018)Farnia, Zhang, and Tse]{farnia2018generalizable}
Farzan Farnia, Jesse~M Zhang, and David Tse.
\newblock Generalizable adversarial training via spectral normalization.
\newblock \emph{arXiv preprint arXiv:1811.07457}, 2018.

\bibitem[Goodfellow et~al.(2014)Goodfellow, Shlens, and Szegedy]{goodfellow2014explaining}
Ian~J Goodfellow, Jonathon Shlens, and Christian Szegedy.
\newblock Explaining and harnessing adversarial examples.
\newblock \emph{arXiv preprint arXiv:1412.6572}, 2014.

\bibitem[Gowal et~al.(2018)Gowal, Dvijotham, Stanforth, Bunel, Qin, Uesato, Arandjelovic, Mann, and Kohli]{gowal2018effectiveness}
Sven Gowal, Krishnamurthy Dvijotham, Robert Stanforth, Rudy Bunel, Chongli Qin, Jonathan Uesato, Relja Arandjelovic, Timothy Mann, and Pushmeet Kohli.
\newblock On the effectiveness of interval bound propagation for training verifiably robust models.
\newblock \emph{arXiv preprint arXiv:1810.12715}, 2018.

\bibitem[Gowal et~al.(2021)Gowal, Rebuffi, Wiles, Stimberg, Calian, and Mann]{gowal2021improving}
Sven Gowal, Sylvestre-Alvise Rebuffi, Olivia Wiles, Florian Stimberg, Dan~Andrei Calian, and Timothy~A Mann.
\newblock Improving robustness using generated data.
\newblock \emph{Advances in Neural Information Processing Systems}, 34:\penalty0 4218--4233, 2021.

\bibitem[He and Xiao(2023)]{he2023structured}
Yang He and Lingao Xiao.
\newblock Structured pruning for deep convolutional neural networks: A survey.
\newblock \emph{IEEE transactions on pattern analysis and machine intelligence}, 2023.

\bibitem[Huang et~al.(2021)Huang, Zhang, Shi, Kolter, and Anandkumar]{huang2021training}
Yujia Huang, Huan Zhang, Yuanyuan Shi, J~Zico Kolter, and Anima Anandkumar.
\newblock Training certifiably robust neural networks with efficient local lipschitz bounds.
\newblock \emph{Advances in Neural Information Processing Systems}, 34:\penalty0 22745--22757, 2021.

\bibitem[Ilahi et~al.(2021)Ilahi, Usama, Qadir, Janjua, Al-Fuqaha, Hoang, and Niyato]{ilahi2021challenges}
Inaam Ilahi, Muhammad Usama, Junaid Qadir, Muhammad~Umar Janjua, Ala Al-Fuqaha, Dinh~Thai Hoang, and Dusit Niyato.
\newblock Challenges and countermeasures for adversarial attacks on deep reinforcement learning.
\newblock \emph{IEEE Transactions on Artificial Intelligence}, 3\penalty0 (2):\penalty0 90--109, 2021.

\bibitem[Komkov and Petiushko(2021)]{komkov2021advhat}
Stepan Komkov and Aleksandr Petiushko.
\newblock Advhat: Real-world adversarial attack on arcface face id system.
\newblock In \emph{2020 25th international conference on pattern recognition (ICPR)}, pages 819--826. IEEE, 2021.

\bibitem[Krizhevsky et~al.(2012)Krizhevsky, Sutskever, and Hinton]{krizhevsky2012imagenet}
Alex Krizhevsky, Ilya Sutskever, and Geoffrey~E Hinton.
\newblock Imagenet classification with deep convolutional neural networks.
\newblock \emph{Advances in neural information processing systems}, 25, 2012.

\bibitem[Kumar et~al.(2023)Kumar, Agarwal, Srinivas, Feizi, and Lakkaraju]{kumar2023certifying}
Aounon Kumar, Chirag Agarwal, Suraj Srinivas, Soheil Feizi, and Hima Lakkaraju.
\newblock Certifying llm safety against adversarial prompting.
\newblock \emph{arXiv preprint arXiv:2309.02705}, 2023.

\bibitem[Li et~al.(2018)Li, Ji, Du, Li, and Wang]{li2018textbugger}
Jinfeng Li, Shouling Ji, Tianyu Du, Bo Li, and Ting Wang.
\newblock Textbugger: Generating adversarial text against real-world applications.
\newblock \emph{arXiv preprint arXiv:1812.05271}, 2018.

\bibitem[Liu et~al.(2023)Liu, Dong, Xiang, Yang, Su, Zhu, Chen, He, Xue, and Zheng]{liu2023comprehensive}
Chang Liu, Yinpeng Dong, Wenzhao Xiang, Xiao Yang, Hang Su, Jun Zhu, Yuefeng Chen, Yuan He, Hui Xue, and Shibao Zheng.
\newblock A comprehensive study on robustness of image classification models: Benchmarking and rethinking.
\newblock \emph{arXiv preprint arXiv:2302.14301}, 2023.

\bibitem[Liu et~al.(2021)Liu, Lin, Cao, Hu, Wei, Zhang, Lin, and Guo]{liu2021swin}
Ze Liu, Yutong Lin, Yue Cao, Han Hu, Yixuan Wei, Zheng Zhang, Stephen Lin, and Baining Guo.
\newblock Swin transformer: Hierarchical vision transformer using shifted windows.
\newblock In \emph{Proceedings of the IEEE/CVF international conference on computer vision}, pages 10012--10022, 2021.

\bibitem[Madry et~al.(2017)Madry, Makelov, Schmidt, Tsipras, and Vladu]{madry2017towards}
Aleksander Madry, Aleksandar Makelov, Ludwig Schmidt, Dimitris Tsipras, and Adrian Vladu.
\newblock Towards deep learning models resistant to adversarial attacks.
\newblock \emph{arXiv preprint arXiv:1706.06083}, 2017.

\bibitem[Mao et~al.(2024)Mao, Mueller, Fischer, and Vechev]{mao2024understanding}
Yuhao Mao, Mark~Niklas Mueller, Marc Fischer, and Martin Vechev.
\newblock Understanding certified training with interval bound propagation.
\newblock In \emph{The Twelfth International Conference on Learning Representations}, 2024.

\bibitem[M{\"u}ller et~al.(2022)M{\"u}ller, Eckert, Fischer, and Vechev]{muller2022certified}
Mark~Niklas M{\"u}ller, Franziska Eckert, Marc Fischer, and Martin Vechev.
\newblock Certified training: Small boxes are all you need.
\newblock \emph{arXiv preprint arXiv:2210.04871}, 2022.

\bibitem[Peng et~al.(2023)Peng, Xu, Cornelius, Hull, Li, Duggal, Phute, Martin, and Chau]{peng2023robust}
ShengYun Peng, Weilin Xu, Cory Cornelius, Matthew Hull, Kevin Li, Rahul Duggal, Mansi Phute, Jason Martin, and Duen~Horng Chau.
\newblock Robust principles: Architectural design principles for adversarially robust cnns.
\newblock \emph{arXiv preprint arXiv:2308.16258}, 2023.

\bibitem[Redmon et~al.(2016)Redmon, Divvala, Girshick, and Farhadi]{redmon2016you}
Joseph Redmon, Santosh Divvala, Ross Girshick, and Ali Farhadi.
\newblock You only look once: Unified, real-time object detection.
\newblock In \emph{Proceedings of the IEEE conference on computer vision and pattern recognition}, pages 779--788, 2016.

\bibitem[Rice et~al.(2020)Rice, Wong, and Kolter]{rice2020overfitting}
Leslie Rice, Eric Wong, and Zico Kolter.
\newblock Overfitting in adversarially robust deep learning.
\newblock In \emph{International conference on machine learning}, pages 8093--8104. PMLR, 2020.

\bibitem[Szegedy et~al.(2013)Szegedy, Zaremba, Sutskever, Bruna, Erhan, Goodfellow, and Fergus]{szegedy2013intriguing}
Christian Szegedy, Wojciech Zaremba, Ilya Sutskever, Joan Bruna, Dumitru Erhan, Ian Goodfellow, and Rob Fergus.
\newblock Intriguing properties of neural networks.
\newblock \emph{arXiv preprint arXiv:1312.6199}, 2013.

\bibitem[Wang et~al.(2021)Wang, He, Wang, and He]{wang2021admix}
Xiaosen Wang, Xuanran He, Jingdong Wang, and Kun He.
\newblock Admix: Enhancing the transferability of adversarial attacks.
\newblock In \emph{Proceedings of the IEEE/CVF International Conference on Computer Vision}, pages 16158--16167, 2021.

\bibitem[Wang et~al.(2022)Wang, Liu, Chang, Rodr{\'\i}guez, and Wang]{wang2022di}
Yixiang Wang, Jiqiang Liu, Xiaolin Chang, Ricardo~J Rodr{\'\i}guez, and Jianhua Wang.
\newblock Di-aa: An interpretable white-box attack for fooling deep neural networks.
\newblock \emph{Information Sciences}, 610:\penalty0 14--32, 2022.

\bibitem[Wang et~al.(2023)Wang, Pang, Du, Lin, Liu, and Yan]{wang2023better}
Zekai Wang, Tianyu Pang, Chao Du, Min Lin, Weiwei Liu, and Shuicheng Yan.
\newblock Better diffusion models further improve adversarial training.
\newblock In \emph{International Conference on Machine Learning}, pages 36246--36263. PMLR, 2023.

\bibitem[Wei et~al.(2022)Wei, Guo, and Yu]{wei2022adversarial}
Xingxing Wei, Ying Guo, and Jie Yu.
\newblock Adversarial sticker: A stealthy attack method in the physical world.
\newblock \emph{IEEE Transactions on Pattern Analysis and Machine Intelligence}, 45\penalty0 (3):\penalty0 2711--2725, 2022.

\bibitem[Wu et~al.(2020)Wu, Xia, and Wang]{wu2020adversarial}
Dongxian Wu, Shu-Tao Xia, and Yisen Wang.
\newblock Adversarial weight perturbation helps robust generalization.
\newblock \emph{Advances in neural information processing systems}, 33:\penalty0 2958--2969, 2020.

\bibitem[Xie et~al.(2021)Xie, Li, Shi, Liu, Chen, and Yuan]{xie2021enabling}
Yi Xie, Zhuohang Li, Cong Shi, Jian Liu, Yingying Chen, and Bo Yuan.
\newblock Enabling fast and universal audio adversarial attack using generative model.
\newblock In \emph{Proceedings of the AAAI conference on artificial intelligence}, pages 14129--14137, 2021.

\bibitem[Xu et~al.(2020)Xu, Zhang, Liu, Fan, Sun, Chen, Chen, Wang, and Lin]{xu2020adversarial}
Kaidi Xu, Gaoyuan Zhang, Sijia Liu, Quanfu Fan, Mengshu Sun, Hongge Chen, Pin-Yu Chen, Yanzhi Wang, and Xue Lin.
\newblock Adversarial t-shirt! evading person detectors in a physical world.
\newblock In \emph{Computer Vision--ECCV 2020: 16th European Conference, Glasgow, UK, August 23--28, 2020, Proceedings, Part V 16}, pages 665--681. Springer, 2020.

\bibitem[Yao et~al.(2023)Yao, Ning, Liu, Ning, and Yuan]{yao2023llm}
Jia-Yu Yao, Kun-Peng Ning, Zhen-Hui Liu, Mu-Nan Ning, and Li Yuan.
\newblock Llm lies: Hallucinations are not bugs, but features as adversarial examples.
\newblock \emph{arXiv preprint arXiv:2310.01469}, 2023.

\bibitem[Yin et~al.(2022)Yin, Li, Song, Asif, Roy-Chowdhury, and Krishnamurthy]{yin2022adc}
Mingjun Yin, Shasha Li, Chengyu Song, M~Salman Asif, Amit~K Roy-Chowdhury, and Srikanth~V Krishnamurthy.
\newblock Adc: Adversarial attacks against object detection that evade context consistency checks.
\newblock In \emph{Proceedings of the IEEE/CVF Winter Conference on Applications of Computer Vision}, pages 3278--3287, 2022.

\bibitem[Yoshida and Miyato(2017)]{yoshida2017spectral}
Yuichi Yoshida and Takeru Miyato.
\newblock Spectral norm regularization for improving the generalizability of deep learning.
\newblock \emph{arXiv preprint arXiv:1705.10941}, 2017.

\bibitem[Zhang et~al.(2019)Zhang, Yu, Jiao, Xing, El~Ghaoui, and Jordan]{zhang2019theoretically}
Hongyang Zhang, Yaodong Yu, Jiantao Jiao, Eric Xing, Laurent El~Ghaoui, and Michael Jordan.
\newblock Theoretically principled trade-off between robustness and accuracy.
\newblock In \emph{International conference on machine learning}, pages 7472--7482. PMLR, 2019.

\bibitem[Zhang et~al.(2020)Zhang, Xu, Han, Niu, Cui, Sugiyama, and Kankanhalli]{zhang2020attacks}
Jingfeng Zhang, Xilie Xu, Bo Han, Gang Niu, Lizhen Cui, Masashi Sugiyama, and Mohan Kankanhalli.
\newblock Attacks which do not kill training make adversarial learning stronger.
\newblock In \emph{International conference on machine learning}, pages 11278--11287. PMLR, 2020.

\bibitem[Zhang et~al.(2022)Zhang, Zhang, Khanduri, Hong, Chang, and Liu]{zhang2022revisiting}
Yihua Zhang, Guanhua Zhang, Prashant Khanduri, Mingyi Hong, Shiyu Chang, and Sijia Liu.
\newblock Revisiting and advancing fast adversarial training through the lens of bi-level optimization.
\newblock In \emph{International Conference on Machine Learning}, pages 26693--26712. PMLR, 2022.

\bibitem[Zhao et~al.(2020)Zhao, Chen, Wang, and Lin]{zhao2020towards}
Pu Zhao, Pin-Yu Chen, Siyue Wang, and Xue Lin.
\newblock Towards query-efficient black-box adversary with zeroth-order natural gradient descent.
\newblock In \emph{Proceedings of the AAAI Conference on Artificial Intelligence}, pages 6909--6916, 2020.

\bibitem[Zhu et~al.(2023)Zhu, Wang, Zhou, Wang, Chen, Wang, Yang, Ye, Gong, Zhang, et~al.]{zhu2023promptbench}
Kaijie Zhu, Jindong Wang, Jiaheng Zhou, Zichen Wang, Hao Chen, Yidong Wang, Linyi Yang, Wei Ye, Neil~Zhenqiang Gong, Yue Zhang, et~al.
\newblock Promptbench: Towards evaluating the robustness of large language models on adversarial prompts.
\newblock \emph{arXiv preprint arXiv:2306.04528}, 2023.

\bibitem[Z{\"u}hlke and Kudenko(2024)]{zuhlke2024adversarial}
Monty-Maximilian Z{\"u}hlke and Daniel Kudenko.
\newblock Adversarial robustness of neural networks from the perspective of lipschitz calculus: A survey.
\newblock \emph{ACM Computing Surveys}, 2024.

\end{thebibliography}
}

\appendix
\section{Ablation Study}
\label{sec:appd:exp}

\subsection{Hyper-parameter Selection}
\label{sec:hyper-para_select}
This experiment investigates how the choice of hyper-parameter $c^r$ influences standard accuracy and robust accuracy. Since most models are represented in 16 bit format, and the widths of fraction bit for FP16 format defined by IEEE-754 standard and BFloat are 10 and 7 bits, respectively, truncated errors might easily occur when performing addition on two numbers with a magnitude difference of $2^8$ or higher. On the other hand, when $c^r$ is set to $2^{-5}$, all models experience a significant drop in standard accuracy, and there is meaningless in evaluating robustness at this configuration. We suggest that the candidates of $c^r$ are $2^{-8}$, $2^{-7}$ and $2^{-6}$. 

The results on CIFAR10 and CIFAR100 are presented in Table \ref{table:select_cr_CIFAR10} and Table \ref{table:select_cr_CIFAR100}, respectively. Moreover, the results of accuracy against CW attack on $L_\infty$ norm for CIFAR10 and CIFAR100 datasets are presented in Tables \ref{table:CWinf_CIFAR10} and \ref{table:CWinf_CIFAR100}, respectively. As can be seen, when $c^r$ is set to $2^{-8}$, all models achieve better standard accuracy and robust accuracy. Additionally, the results for all models with $c^r=2^{-7}$ are surpassed by those when $c^r$ is set to $2^{-8}$. Robust accuracy can be further enhanced by setting $2^{-6}$, while standard accuracy might drop compared to the original. The results suggest that $c^r=2^{-7}$ is a solution that balances standard accuracy and robustness. Nevertheless, when robustness is a major concern, $c^r=2^{-6}$ is a better choice. 

Intuitively, we expect that standard accuracy gradually decreases when the value of $c^r$ increases. The phenomenon can be observed when $c^r$ is $2^{-6}$ or higher but two counterexamples are reported in the ablation study when setting $c^r$ to $2^{-7}$ and $2^{-8}$. A possible explanation is that the optimizer becomes stuck in a saddle area, as ReLU is non-differentiable at the zero point. This might cause the gradient direction to become stuck in an oscillation when values are close to zero. By shifting those values to zero, antagonistic effects among different feature maps, filters, or channels are accidentally mitigated. However, further investigation and evidence are needed to support this conjecture.

We argue that any function that satisfies the conditions defined in (13) can shrink the largest eigenvalue. There might be another function that can perform better than the proposed one. Besides, the hyper-parameter is determined by choosing the maximum value appearing in the dataset.

\begin{table*}[t]
\centering
\begin{tabular}{c|cc|cc|cc|cc}
\toprule
\multirow{2}{*}{Method} & \multicolumn{2}{|c|}{RobustBench} & \multicolumn{2}{c|}{$c^r=2^{-8}$} & \multicolumn{2}{c|}{$c^r=2^{-7}$} & \multicolumn{2}{c}{$c^r=2^{-6}$} \\
                        & acc\textsubscript{nat} & acc\textsubscript{AA} & acc\textsubscript{nat} & acc\textsubscript{AA} & acc\textsubscript{nat} & acc\textsubscript{AA} & acc\textsubscript{nat} & acc\textsubscript{AA}  \\
\midrule
RST-AWP                 & 88.25 & 60.04 & 88.82 & 60.96 & 89.50 & 62.76 & 87.88 & 61.96 \\
DefEAT                  & 86.54 & 57.30 & 86.88 & 57.81 & 87.40 & 59.55 & 84.59 & 61.08 \\
LTD                     & 85.21 & 56.94 & 85.28 & 57.28 & 85.98 & 59.25 & 85.59 & 60.63 \\
AWP                     & 85.36 & 56.17 & 85.80 & 56.53 & 86.19 & 57.85 & 84.55 & 59.21\\
\midrule
TRADES                  & 85.34 & 52.86 & 85.57 & 52.97 & 85.78 & 53.80 & 85.49 & 55.37 \\
\bottomrule
\end{tabular}
  \caption{Ablation study on selecting optimal $c^r$ for CIFAR10 dataset.}
  \label{table:select_cr_CIFAR10}
\end{table*}

\begin{table*}[t]
\centering
\begin{tabular}{c|cc|cc|cc|cc}
\toprule
\multirow{2}{*}{Method} & \multicolumn{2}{|c|}{RobustBench} & \multicolumn{2}{c|}{$c^r=2^{-8}$} & \multicolumn{2}{c|}{$c^r=2^{-7}$} & \multicolumn{2}{c}{$c^r=2^{-6}$} \\
                        & acc\textsubscript{nat} & acc\textsubscript{AA} & acc\textsubscript{nat} & acc\textsubscript{AA} & acc\textsubscript{nat} & acc\textsubscript{AA} & acc\textsubscript{nat} & acc\textsubscript{AA}  \\
\midrule
EffAug                  & 68.75 & 31.85 & 68.81 & 32.00 & 69.14 & 32.57 & 68.44 & 33.64 \\ 
DKLD                    & 64.08 & 31.65 & 64.10 & 31.77 & 64.26 & 32.58 & 63.50 & 33.87 \\
DefEAT                  & 65.89 & 30.57 & 66.12 & 31.11 & 66.42 & 32.46 & 65.06 & 34.07 \\
LTD                     & 64.07 & 30.59 & 64.29 & 31.13 & 64.29 & 31.95 & 64.18 & 34.04 \\
AWP                     & 60.38 & 28.86 & 60.18 & 29.10 & 60.63 & 29.72 & 60.71 & 30.82 \\
\bottomrule
\end{tabular}
  \caption{Ablation study on selecting optimal $c^r$ for CIFAR100 dataset.}
  \label{table:select_cr_CIFAR100}
\end{table*}

\subsection{Scaling Training Data for Hyperparameter Tuning}
This experiment investigates how the number of training images used for hyperparameter selection affects both standard accuracy and robust accuracy. As shown in Table~\ref{table:top_3_robustbench_imagenet_ablation}, robust accuracy decreases slightly as the amount of training data increases, but it consistently remains higher than that of the competing methods.

\begin{table}[ht]
  \centering

    \centering
    \begin{adjustbox}{width=1.0\linewidth}
      \begin{tabular}{c|cccc}
        \toprule
        \# & Method & Training Data [\#] & acc\textsubscript{nat} & acc\textsubscript{AA} \\
        \midrule
        1 & \cite{liu2023comprehensive} + Ours & 5,000  & 78.88 & \textbf{60.04} \\ 
        2 & \cite{liu2023comprehensive} + Ours & 10,000 & 78.92 & 59.97 \\ 
        3 & \cite{liu2023comprehensive} + Ours & 20,000 & 78.90 & 59.81 \\ 
        \hline
        4 & \cite{liu2023comprehensive}        & -                 & 78.92 & 59.56 \\
        5 & \cite{bai2024mixednuts}            & - & \textbf{81.48} & 58.50 \\
        \bottomrule
      \end{tabular}
    \end{adjustbox}
  \caption{Ablation study on the impact of scaling training data for ImageNet dataset.}
  \label{table:top_3_robustbench_imagenet_ablation}
\end{table}

\section{Full Experimental Results of Gradient Masking Verification}
\label{sec:appd:masking}
\begin{table}[t]
  \centering
  \begin{subtable}[t]{.99\linewidth}
      \centering
      \begin{tabular}{c|c|ccc}
        \toprule
        \multirow{2}{*}{Method} & \multirow{2}{*}{Origin} & \multicolumn{3}{c}{$c^r$} \\
                                &                         & $2^{-8}$ & $2^{-7}$ & $2^{-6}$  \\
        \midrule
        RST-AWP                 & 58.98 & 61.84 & 68.24 & 80.92 \\
        DefEAT                  & 56.92 & 58.02 & 61.06 & 65.56 \\
        LTD                     & 58.12 & 58.56 & 60.50 & 64.86 \\
        AWP                     & 56.84 & 57.34 & 60.58 & 66.50 \\
        \midrule
        TRADES                  & 56.10 & 56.52 & 58.18 & 63.62 \\
        \bottomrule
      \end{tabular}
      \caption{CIFAR10 dataset}
      \label{table:CWinf_CIFAR10}
  \end{subtable}%
  \newline
  \bigskip
  \newline
  \begin{subtable}[t]{.99\linewidth}
      \centering
      \begin{tabular}{c|c|ccc}
        \toprule
        \multirow{2}{*}{Method} & \multirow{2}{*}{Origin} & \multicolumn{3}{c}{$c^r$} \\
                                &                         & $2^{-8}$  & $2^{-7}$  & $2^{-6}$  \\
        \midrule
        EffAug                  & 37.40 & 37.70 & 38.70 & 43.00 \\ 
        DKLD                    & 37.50 & 38.06 & 39.38 & 44.20 \\
        DefEAT                  & 36.90 & 37.56 & 39.82 & 44.30 \\
        LTD                     & 36.66 & 37.32 & 38.86 & 43.44 \\
        AWP                     & 34.56 & 35.20 & 35.94 & 40.40 \\
        \bottomrule
      \end{tabular}
      \caption{CIFAR100 dataset}
      \label{table:CWinf_CIFAR100}
  \end{subtable}
  \caption{The robust accuracy against CW attack on $L_\infty$ norm.}
\end{table}

Table \ref{table:transfer_CIFAR10} and \ref{table:transfer_CIFAR100} present the robust accuracy against adversarial examples generated by the original models on CIFAR10 and CIFAR100 datasets, respectively. As observed, none of the models showed lower robust accuracy than the original model. It indicates that adversarial examples can be efficiently crafted by utilizing the gradients from the victim models.

Table \ref{table:full_acc_FGSM_PGD_CIFAR10} and \ref{table:full_acc_FGSM_PGD_CIFAR100} presents the robust accuracy against FGSM and PGD attacks among different radii of the $\epsilon$-ball on the CIFAR10 and CIFAR100 datasets, respectively. As observed, the robust accuracy against FGSM, a one-step attack, is always higher than the robust accuracy against PGD, an iterative attack. This implies that the gradient is reliable, allowing the PGD attack to adjust the gradient direction multiple times to find adversarial examples. Additionally, we observe that the robust accuracy against PGD attacks for all models gradually decreases to zero as the radius of the $\epsilon$-ball increases. This indicates that the quality of gradients is preserved, enabling PGD attacks to move the gradient toward examples not in the observed distribution.

Figure \ref{fig:rs_cifar10} illustrates the certified robustness achieved by random smoothing for various models on the CIFAR10 dataset, where \textit{Original} refers to the certified robustness of the original model, while \textit{Ours} denotes the robustness of the model combined with the proposed method. As can be seen, our method brings slight improvements in robustness, except for the AWP model. These results demonstrate that our algorithm does not suffer from the gradient masking issue. However, the empirical Lipschitz constant is derived from the observed data. As the input distribution drawn from random smoothing and the observed data might have discrepancies, this could result in fluctuations in robustness.

Alternatively, we can empirically assess the gradient masking using Lemma 2. The first step is the same as usual: obtaining the hyperparameter $c^{\text{th}_i}$ for each layer. The second step involves counting the occurrences of specific elements in the input vectors that are remapped to $0$. The corresponding columns of the weight matrix are then replaced with zero vectors. Table \ref{table:robustbench_imagenet_BPDA} presents the results, where \textit{\cite{liu2023comprehensive} + Ours + Weight} refers to the alternative implementation. As shown, the alternative implementation achieves better robust accuracy compared to the original model. However, the proposed method outperforms the alternative implementation. This is because the proposed method offers the flexibility to remap the input vectors on a sample-wise granularity, while the alternative implementation replaces the corresponding columns of the weight matrix, which has a broader influence on all samples. 

\begin{table}[t]
      \centering
      \begin{adjustbox}{width=1.0\linewidth}
      \begin{tabular}{cccc}
          \toprule
          Method & Architecutre & acc\textsubscript {nat} & acc\textsubscript {AA} \\
          \midrule
          \cite{liu2023comprehensive} + Ours & Swin-L                & 78.88 & 60.04 \\ 
          \cite{liu2023comprehensive} + Ours + Weight & Swin-L       & 78.82 & 59.71 \\ 
          \cite{liu2023comprehensive}        & Swin-L                & 78.92 & 59.56 \\
        \bottomrule
      \end{tabular}
      \end{adjustbox}
  \caption{Gradient masking verification by the weight adjustment.}
  \label{table:robustbench_imagenet_BPDA}
\end{table}

\begin{figure*}[t]
    \centering
    \begin{subfigure}{.48\textwidth}
        \centering
        \includegraphics[width=.95\linewidth]{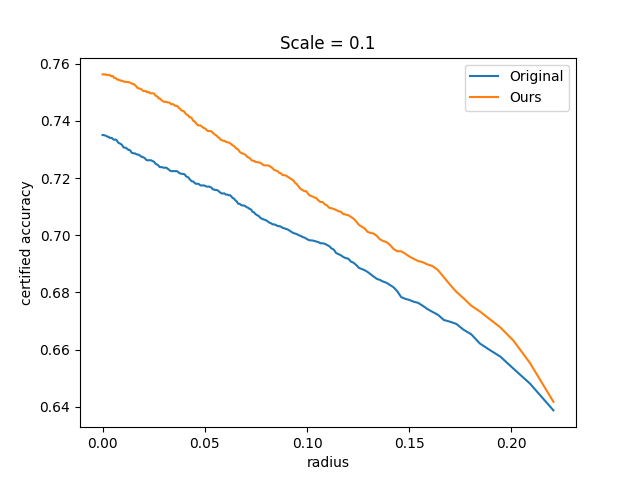}
        \caption{RST-AWP}
    \end{subfigure}
    \begin{subfigure}{0.48\textwidth}
        \centering
        \includegraphics[width=.95\linewidth]{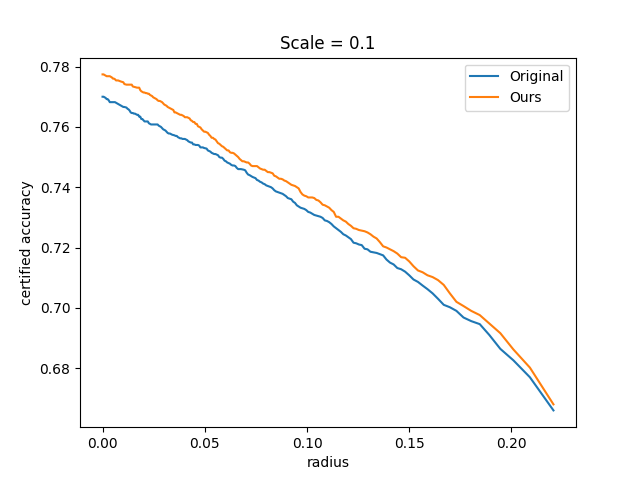}
        \caption{DefEAT}
    \end{subfigure}
    \begin{subfigure}{.48\textwidth}
        \centering
        \includegraphics[width=.95\linewidth]{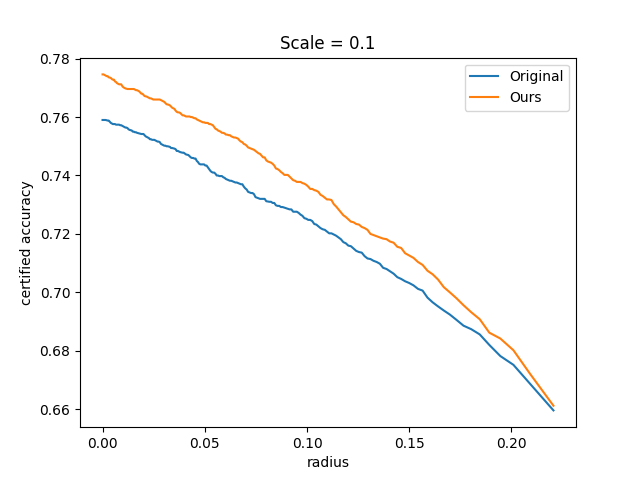}
        \caption{LTD}
    \end{subfigure}
    \begin{subfigure}{0.48\textwidth}
        \centering
        \includegraphics[width=.95\linewidth]{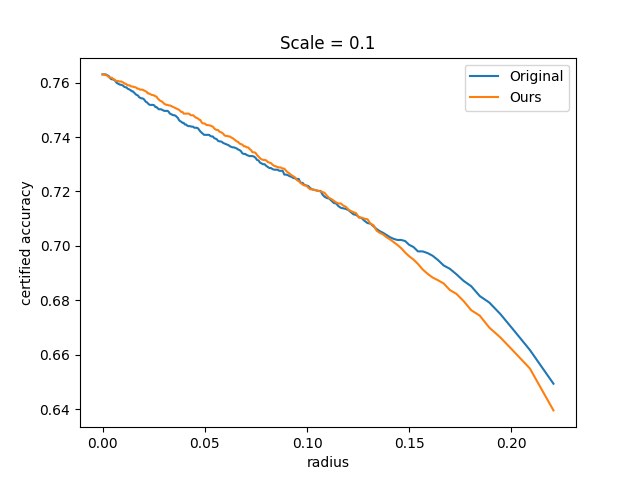}
        \caption{AWP}
    \end{subfigure}
    \caption{Certified robustness that conducted by random smoothing.}
    \label{fig:rs_cifar10}
\end{figure*}

\begin{table}[t]
  \begin{subtable}[t]{.99\linewidth}
      \centering
      \begin{tabular}{c|c|ccc}
        \toprule
        \multirow{2}{*}{Method} & \multirow{2}{*}{Origin} & \multicolumn{3}{c}{$c^r$} \\
                                &                         & $2^{-8}$ & $2^{-7}$ & $2^{-6}$  \\
        \midrule
        RST-AWP                 & 60.04 & 62.10 & 65.10 & 70.53 \\
        DefEAT                  & 57.30 & 58.37 & 60.39 & 66.10 \\
        LTD                     & 56.94 & 58.71 & 61.63 & 66.47 \\
        AWP                     & 56.17 & 57.49 & 59.74 & 65.58 \\
        \midrule
        TRADES                  & 52.86 & 55.55 & 55.09 & 58.68 \\
        \bottomrule
      \end{tabular}
      \caption{CIFAR10 dataset}
      \label{table:transfer_CIFAR10}
  \end{subtable}%
  \newline
  \bigskip
  \newline
  \begin{subtable}[t]{.99\linewidth}
      \centering
      \begin{tabular}{c|c|ccc}
        \toprule
        \multirow{2}{*}{Method} & \multirow{2}{*}{Origin} & \multicolumn{3}{c}{$c^r$} \\
                                &                         & $2^{-8}$  & $2^{-7}$  & $2^{-6}$  \\
        \midrule
        EffAug                  & 31.85 & 32.87 & 35.08 & 40.04 \\ 
        DKLD                    & 31.65 & 32.91 & 35.04 & 40.58 \\
        DefEAT                  & 30.57 & 31.82 & 33.94 & 40.67 \\
        LTD                     & 30.59 & 32.05 & 34.07 & 39.11 \\
        AWP                     & 28.86 & 29.88 & 32.18 & 36.67 \\
        \bottomrule
      \end{tabular}
      \caption{CIFAR100 dataset}
      \label{table:transfer_CIFAR100}
  \end{subtable}
  \caption{The robust accuracy against adversarial examples generated by the original models.}
\end{table}

\begin{table*}[t]
\centering
\begin{tabular}{ccc|cccccccccccccc}
\toprule
\multirow{2}{*}{Method} & \multirow{2}{*}{$c^r$} & \multirow{2}{*}{Attack} & \multicolumn{8}{c}{$\epsilon$} \\
                        &                        &                         & $\frac{1}{255}$ & $\frac{2}{255}$ & $\frac{4}{255}$ & $\frac{8}{255}$ & $\frac{16}{255}$ & $\frac{32}{255}$ & $\frac{64}{255}$ & $\frac{96}{255}$ \\
\midrule
\multirow{6}{*}{RST-AWP}& \multirow{2}{*}{$2^{-8}$} & FGSM & 88.28 & 86.94 & 83.80 & 75.12 & 57.23 & 34.04 & 18.80 & 19.03 \\
                        &                           & PGD  & 86.78 & 84.47 & 79.03 & 66.03 & 34.02 & 2.01  & 0.01  & 0.0   \\
                        \cmidrule(lr){2-11}
                        & \multirow{2}{*}{$2^{-7}$} & FGSM & 89.46 & 88.28 & 85.64 & 77.62 & 60.60 & 35.91 & 19.39 & 20.68 \\
                        &                           & PGD  & 88.03 & 85.88 & 80.89 & 69.24 & 38.27 & 3.08  & 0.1   & 0.0   \\
                        \cmidrule(lr){2-11}
                        & \multirow{2}{*}{$2^{-6}$} & FGSM & 87.70 & 86.63 & 84.38 & 77.91 & 60.93 & 33.48 & 16.12 & 18.81 \\
                        &                           & PGD  & 86.38 & 84.69 & 81.19 & 73.72 & 52.24 & 11.89 & 0.19  & 0.0   \\
\midrule
\multirow{6}{*}{DefEAT} & \multirow{2}{*}{$2^{-8}$} & FGSM & 86.38 & 85.40 & 81.98 & 72.73 & 53.28 & 30.34 & 18.13 & 19.65 \\
                        &                           & PGD  & 84.52 & 82.07 & 76.51 & 63.71 & 33.87 & 1.76  & 0.0   & 0.0   \\
                        \cmidrule(lr){2-11}
                        & \multirow{2}{*}{$2^{-7}$} & FGSM & 86.69 & 85.70 & 83.05 & 74.35 & 56.36 & 32.04 & 18.88 & 21.40 \\
                        &                           & PGD  & 85.11 & 82.87 & 78.00 & 66.54 & 38.70 & 3.12  & 0.0   & 0.0   \\
                        \cmidrule(lr){2-11}
                        & \multirow{2}{*}{$2^{-6}$} & FGSM & 84.14 & 83.57 & 81.03 & 74.63 & 57.67 & 28.11 & 13.17 & 19.38 \\
                        &                           & PGD  & 82.96 & 81.35 & 77.37 & 69.52 & 50.70 & 10.05 & 0.2   & 0.0   \\
\midrule
\multirow{6}{*}{LTD}    & \multirow{2}{*}{$2^{-8}$} &  FGSM & 84.94 & 83.87 & 81.15 & 72.80 & 55.45 & 33.06 & 18.44 & 17.48 \\
                        &                           &  PGD  & 83.13 & 80.68 & 75.53 & 63.52 & 34.81 & 2.64  & 0.0   & 0.0   \\
                        \cmidrule(lr){2-11}
                        & \multirow{2}{*}{$2^{-7}$} &  FGSM & 85.48 & 84.67 & 82.24 & 74.10 & 57.41 & 35.53 & 17.57 & 17.50 \\
                        &                           &  PGD  & 83.88 & 81.88 & 77.00 & 65.38 & 28.57 & 3.95  & 0.0   & 0.0   \\
                        \cmidrule(lr){2-11}
                        & \multirow{2}{*}{$2^{-6}$} &  FGSM & 85.06 & 84.28 & 82.21 & 75.77 & 60.79 & 34.39 & 14.34 & 15.42 \\
                        &                           &  PGD  & 83.91 & 82.00 & 78.33 & 69.82 & 49.95 & 11.63 & 0.21  & 0.0   \\
\midrule
\multirow{6}{*}{AWP}    & \multirow{2}{*}{$2^{-8}$} &  FGSM & 85.11 & 83.90 & 80.68 & 71.28 & 53.78 & 33.21 & 20.86 & 19.61 \\
                        &                           &  PGD  & 83.34 & 80.34 & 75.08 & 61.53 & 30.50 & 1.89  & 0.03  & 0.0   \\
                        \cmidrule(lr){2-11}
                        & \multirow{2}{*}{$2^{-7}$} &  FGSM & 85.50 & 84.68 & 81.75 & 73.56 & 57.19 & 35.50 & 20.63 & 20.16 \\
                        &                           &  PGD  & 83.94 & 81.62 & 76.34 & 65.57 & 34.16 & 2.89  & 0.02  & 0.0   \\
                        \cmidrule(lr){2-11}
                        & \multirow{2}{*}{$2^{-6}$} &  FGSM & 83.87 & 83.19 & 81.08 & 74.97 & 61.49 & 35.13 & 16.12 & 18.69 \\
                        &                           &  PGD  & 83.00 & 81.42 & 78.13 & 71.50 & 54.95 & 14.78 & 0.41  & 0.0   \\
\midrule
\multirow{6}{*}{TRADES} & \multirow{2}{*}{$2^{-8}$} &  FGSM & 84.74 & 83.51 & 70.58 & 70.50 & 54.23 & 36.53 & 23.97 & 23.46 \\
                        &                           &  PGD  & 82.62 & 79.72 & 72.81 & 57.30 & 24.21 & 1.21  & 0.01  & 0.0   \\
                        \cmidrule(lr){2-11}
                        & \multirow{2}{*}{$2^{-7}$} &  FGSM & 85.00 & 84.02 & 80.57 & 71.61 & 55.94 & 25.67 & 22.31 & 22.51 \\
                        &                           &  PGD  & 82.96 & 80.31 & 73.75 & 58.91 & 26.22 & 1.31  & 0.02  & 0.0   \\
                        \cmidrule(lr){2-11}
                        & \multirow{2}{*}{$2^{-6}$} &  FGSM & 85.05 & 84.19 & 81.40 & 75.07 & 60.24 & 36.99 & 21.81 & 21.21 \\
                        &                           &  PGD  & 83.23 & 81.33 & 75.81 & 64.56 & 36.76 & 3.37  & 0.02  & 0.0   \\
\bottomrule
\end{tabular}
\caption{The robust accuracy against FGSM and PGD attacks among different radii of $\epsilon$-ball on CIFAR10 dataset.}
\label{table:full_acc_FGSM_PGD_CIFAR10}

\end{table*}

\begin{table*}[t]
\centering
\begin{tabular}{ccc|cccccccccccccc}
\toprule
\multirow{2}{*}{Method} & \multirow{2}{*}{$c^r$} & \multirow{2}{*}{Attack} & \multicolumn{8}{c}{$\epsilon$} \\
                        &                        &                         & $\frac{1}{255}$ & $\frac{2}{255}$ & $\frac{4}{255}$ & $\frac{8}{255}$ & $\frac{16}{255}$ & $\frac{32}{255}$ & $\frac{64}{255}$ & $\frac{96}{255}$ \\
\midrule
\multirow{6}{*}{EffAug} & \multirow{2}{*}{$2^{-8}$} & FGSM & 68.02 & 65.96 & 60.36 & 49.65 & 33.90 & 17.39 & 7.11  & 5.77 \\
                        &                           & PGD  & 64.92 & 61.04 & 52.82 & 39.37 & 17.53 & 1.81  & 0.0   & 0.0   \\
                        \cmidrule(lr){2-11}
                        & \multirow{2}{*}{$2^{-7}$} & FGSM & 68.41 & 66.56 & 61.84 & 51.83 & 36.59 & 18.34 & 7.02  & 6.70  \\
                        &                           & PGD  & 65.66 & 62.02 & 54.58 & 41.55 & 19.70 & 2.26  & 0.0   & 0.0   \\
                        \cmidrule(lr){2-11}
                        & \multirow{2}{*}{$2^{-6}$} & FGSM & 67.84 & 67.25 & 64.65 & 57.97 & 44.23 & 22.20 & 8.26  & 8.85  \\
                        &                           & PGD  & 66.38 & 64.20 & 59.63 & 50.89 & 33.98 & 7.27  & 0.17  & 0.0   \\
\midrule
\multirow{6}{*}{DKLD}   & \multirow{2}{*}{$2^{-8}$} & FGSM & 63.47 & 61.94 & 58.15 & 48.86 & 34.39 & 17.73 & 6.26  & 3.66  \\
                        &                           & PGD  & 60.71 & 57.14 & 50.44 & 38.14 & 17.38 & 1.97  & 0.0   & 0.0   \\
                        \cmidrule(lr){2-11}
                        & \multirow{2}{*}{$2^{-7}$} & FGSM & 63.55 & 62.31 & 58.65 & 50.37 & 36.49 & 18.69 & 6.36  & 4.29  \\
                        &                           & PGD  & 61.06 & 57.84 & 51.51 & 39.99 & 19.71 & 2.37  & 0.0   & 0.0   \\
                        \cmidrule(lr){2-11}
                        & \multirow{2}{*}{$2^{-6}$} & FGSM & 63.26 & 62.77 & 60.41 & 55.18 & 43.86 & 21.17 & 5.50  & 4.97  \\
                        &                           & PGD  & 61.67 & 59.62 & 55.83 & 48.06 & 33.51 & 7.55  & 0.17  & 0.0   \\
\midrule
\multirow{6}{*}{DefEAT} & \multirow{2}{*}{$2^{-8}$} & FGSM & 65.57 & 64.36 & 59.88 & 49.38 & 32.48 & 15.38 & 5.50  & 3.30  \\
                        &                           & PGD  & 62.39 & 58.96 & 51.85 & 38.59 & 17.18 & 1.45  & 0.0   & 0.0   \\
                        \cmidrule(lr){2-11}
                        & \multirow{2}{*}{$2^{-7}$} & FGSM & 65.97 & 64.96 & 60.67 & 51.44 & 34.84 & 16.36 & 5.33  & 3.96  \\
                        &                           & PGD  & 62.94 & 59.83 & 52.98 & 41.05 & 20.07 & 2.02  & 0.0   & 0.0   \\
                        \cmidrule(lr){2-11}
                        & \multirow{2}{*}{$2^{-6}$} & FGSM & 64.69 & 63.58 & 61.31 & 54.47 & 40.06 & 16.91 & 4.64  & 4.46  \\
                        &                           & PGD  & 62.85 & 60.50 & 56.29 & 47.54 & 30.76 & 5.84  & 0.09  & 0.0   \\
\midrule
\multirow{6}{*}{LTD}    & \multirow{2}{*}{$2^{-8}$} & FGSM & 63.59 & 62.35 & 58.10 & 48.86 & 33.11 & 16.78 & 5.92  & 3.20  \\
                        &                           & PGD  & 61.06 & 57.65 & 50.70 & 38.21 & 18.21 & 1.98  & 0.0   & 0.0   \\
                        \cmidrule(lr){2-11}
                        & \multirow{2}{*}{$2^{-7}$} & FGSM & 64.05 & 62.87 & 59.02 & 50.16 & 34.90 & 17.27 & 5.54  & 3.11  \\
                        &                           & PGD  & 61.51 & 58.32 & 51.84 & 39.89 & 20.32 & 2.26  & 0.0   & 0.0   \\
                        \cmidrule(lr){2-11}
                        & \multirow{2}{*}{$2^{-6}$} & FGSM & 63.62 & 62.98 & 60.96 & 54.96 & 41.51 & 19.78 & 4.69  & 3.12  \\
                        &                           & PGD  & 61.90 & 59.68 & 55.23 & 46.49 & 29.13 & 5.74  & 0.05  & 0.0   \\
\midrule
\multirow{6}{*}{AWP}    & \multirow{2}{*}{$2^{-8}$} & FGSM & 59.77 & 58.06 & 54.21 & 45.54 & 30.98 & 16.69 & 6.49  & 3.97  \\
                        &                           & PGD  & 56.72 & 52.92 & 46.53 & 34.90 & 16.03 & 2.11  & 0.0   & 0.0   \\
                        \cmidrule(lr){2-11}
                        & \multirow{2}{*}{$2^{-7}$} & FGSM & 60.00 & 58.52 & 54.93 & 46.65 & 32.66 & 17.42 & 6.04  & 3.60  \\
                        &                           & PGD  & 57.19 & 53.75 & 47.46 & 36.22 & 17.48 & 2.58  & 0.0   & 0.0   \\
                        \cmidrule(lr){2-11}
                        & \multirow{2}{*}{$2^{-6}$} & FGSM & 60.20 & 59.71 & 57.47 & 52.05 & 39.78 & 21.10 & 5.70  & 3.90  \\
                        &                           & PGD  & 58.19 & 55.66 & 50.91 & 42.37 & 25.53 & 5.68  & 0.09  & 0.0   \\
\bottomrule
\end{tabular}
\caption{The robust accuracy against FGSM and PGD attacks among different radii of $\epsilon$-ball on CIFAR100 dataset.}
\label{table:full_acc_FGSM_PGD_CIFAR100}
\end{table*}

\section{Quantitative Analysis of Empirical Lipschitz Constant}
\begin{figure*}[t]
    \centering
    \begin{subfigure}{.48\textwidth}
        \centering
        \includegraphics[width=.95\linewidth]{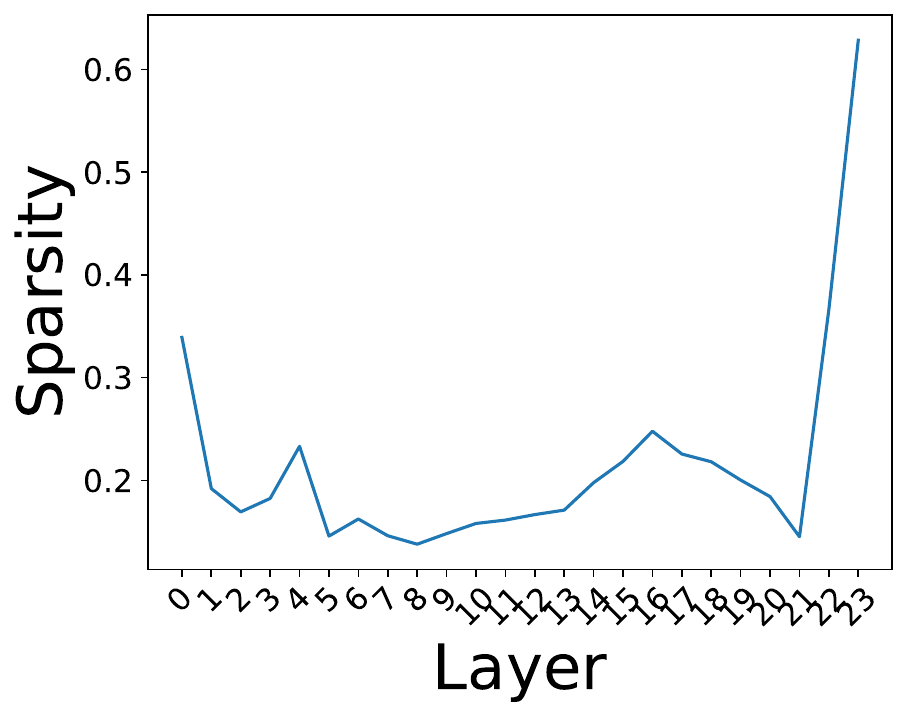}
        \caption{FC1 layers}
    \end{subfigure}
    \begin{subfigure}{0.48\textwidth}
        \centering
        \includegraphics[width=.95\linewidth]{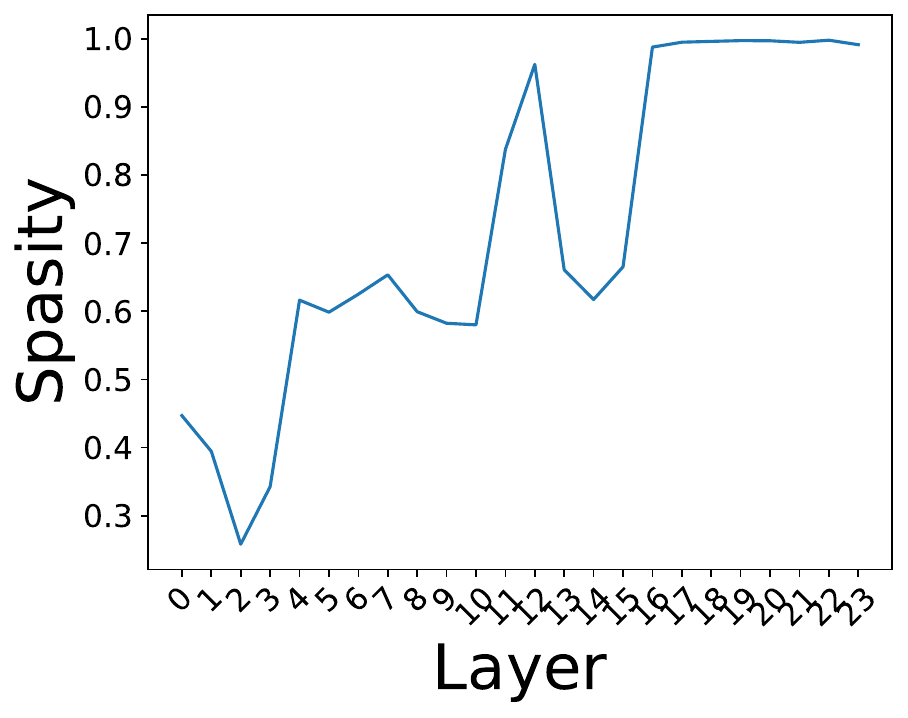}
        \caption{FC2 layers}
    \end{subfigure}
    \caption{The average proportion of pruned activations depends on the location of each linear layer. FC1 and FC2 refer to the first and second fully connected layers in MLP blocks, respectively. }
    \label{fig:act_rate}
\end{figure*}
Our proposed approach, however, is data-driven, relying on observed data to automatically determine the appropriate parameters for constructing the forged function. The sparsity of the forged vectors is a crucial factor influencing the magnitude of the Lipschitz constant for the corresponding layers, although it is not the only factor. Figure \ref{fig:act_rate} illustrates the average proportion of pruned activations, which varies depending on the location of each linear layer. FC1 and FC2 refer to the first and second fully connected layers in the MLP blocks, respectively. As shown, the proportion of pruned activations is approximately 20\% for the FC1 layers, except for the last layer. This suggests that the forged function alone cannot significantly reduce the magnitude of the Lipschitz constant.

On the other hand, the pruned rates for the FC2 layer range from 30\% to 95\%, depending on the layer's location. However, we emphasize that a higher pruning rate does not necessarily lead to a substantial reduction in the empirical Lipschitz constant. This is because, although the Lipschitz constant estimated from most of the data may decline, the empirical Lipschitz constant is based on the worst-case scenario across the entire observed dataset. Experimental results show that, after applying our method, the eigenvalue of the worst-case scenario is approximately 95\% of the original eigenvalue.

\end{document}